\newcommand{\B}{\mathbb{B}}
\newcommand{\R}{\mathbb{R}}
\newcommand{\N}{\mathbb{N}}
\newcommand{\uu}{\mathcal{U}}
\newcommand{\xx}{\mathcal{X}}
\newcommand{\TT}{\mathcal{T}}
\newcommand{\bb}{\mathcal{B}}
\newcommand{\ii}{\mathcal{I}}
\newcommand{\rr}{\mathcal{R}}
\newcommand{\vv}{\mathcal{V}}
\newcommand{\zz}{\mathcal{Z}}
\newcommand{\hh}{\mathcal{H}}
\newcommand{\G}{\mathcal{G}}
\newcommand{\I}{\mathcal{I}}
\newcommand{\dt}{\delta t}
\newcommand{\dox}{\dot{\overline{x}}}
\newcommand{\on}{\overline{n}}
\newcommand{\dxm}{\dot{x}_m}
\newcommand{\ox}{\overline{x}}
\newcommand{\fm}{f_m}
\newcommand{\Gm}{G_m}
\newcommand{\tx}{\tilde{x}}
\newcommand{\ty}{\tilde{y}}
\newcommand{\tf}{\tilde{f}}
\newcommand{\tG}{\tilde{G}}
\newcommand{\tn}{\tilde{n}}
\newcommand{\of}{\overline{f}}
\newcommand{\xk}{\mathfrak{x}}
\newcommand{\dpsi}{\dot{\psi}}
\newcommand{\dx}{\dot{x}}
\newcommand{\dy}{\dot{y}}
\newcommand{\dv}{\dot{v}}
\newcommand{\ub}{\bold{u}}
\newcommand{\nn}{\mathcal{N}}
\newcommand{\trans}{\mathsf{T}}
\newcommand{\ee}{\mathcal{E}}
\newcommand{\lmax}{L_{0}}
\newtheorem{ass}{\bf Assumption}
\newtheorem{prb}{\bf Problem}
\newtheorem{definition}{\bf Definition}
\newtheorem{thm}{\bf Theorem}
\newtheorem{lem}{\bf Lemma}
\newtheorem{con}{\bf Condition}
\newtheorem{obj}{\bf Objective}
\newtheorem{rem}{\bf Remark}
\title{\LARGE \bf
Reachable Predictive Control: A Novel Control Algorithm for Nonlinear Systems with Unknown Dynamics and its Practical Applications*
}
    \author{Taha Shafa, Yiming Meng, and Melkior Ornik
\thanks{*This work was supported in part by NASA grant 80NSSC22M0070 and the Air Force Office of Scientific Research under award number FA9550-23-1-0131.}
\thanks{Taha Shafa, Yiming Meng, and Melkior Ornik are with the Department of Aerospace Engineering and the Coordinated Science Laboratory, University of Illinois Urbana-Champaign, Urbana, USA. 
        {\tt\small tahaas2, ymmeng, mornik@illinois.edu}}}
\begin{document}

\maketitle

\begin{abstract}

This paper proposes an algorithm capable of driving a system to follow a piecewise linear trajectory without prior knowledge of the system dynamics. Motivated by a critical failure scenario in which a system can experience an abrupt change in its dynamics, we demonstrate that it is possible to follow a set of waypoints comprised of states analytically proven to be reachable despite not knowing the system dynamics. The proposed algorithm first applies small perturbations to locally learn the system dynamics around the current state, then computes the set of states that are provably reachable using the locally learned dynamics and their corresponding maximum growth-rate bounds, and finally synthesizes a control action that navigates the system to a guaranteed reachable state.


\end{abstract}

\section{Introduction}


Many real-world systems must operate in environments where their exact dynamics are unknown, yet reliable motion toward safety-critical or mission-critical goals must still be achieved. In such settings, even modest uncertainty in the dynamics can render conventional model-based planning or tracking strategies ineffective. In this paper, we aim to simulate scenarios where the existing system model is ineffective due to an abrupt changes in its dynamics. This could be due to physical failures or environmental disturbances. 

We address this challenge by proposing \textit{Reachable Predictive Control} (RPC), a resilient framework for safe navigation. RPC proceeds iteratively: (1) update local dynamics through active learning, (2) compute a Guaranteed Reachable Set (GRS)  with limited information and assign a feasible waypoint, (3) synthesize a controller to reach that waypoint, and (4) repeat to steer the system toward a provably reachable region. 
Unlike classical robust safety control, which relies on nominal models perturbed by mismatches or disturbances, RPC assumes only a local Lipschitz bound on the dynamics. This encompasses a broader class of systems and enables rapid response to abrupt 
environmental disturbances.

We review some crucial results 
that are pertinent to the work presented in this paper. In the realm of reachability-based robust safety control, frameworks developed around Control Barrier Functions (CBFs) and  Model Predictive Control (MPC) represent  the most widely adopted approaches. 

\textit{CBFs}  are widely used to enforce safety in dynamical systems, often as filters that minimally adjust control inputs to prevent constraint violations. While elegant in theory, constructing valid CBFs is difficult, though integration with data-driven controllers is attractive. Recent work has explored neural-network-based barrier learning through policy evaluation and Hamilton–Jacobi reachability \cite{lin2024one, ramdani2009computing, rungger2018accurate}, extending to parametric uncertainties and dynamic environments. These methods, however, face challenges in dynamics assumptions, verification complexity, and interpretability. The approach in \cite{knoedler2025safety} eases reliance on precise dynamics and proposes runtime CBF approximations, but without provable safety guarantees.

\textit{MPC}   enforces safety by embedding state and input constraints into a finite-horizon problem solved in receding-horizon form \cite{rawlings2020model, mayne2000constrained}. Robust variants such as tube-based \cite{mayne2005robust} and min-max MPC \cite{bemporad1999control} handle uncertainty when a nominal model is available. Extensions combine MPC with CBFs \cite{quan2021robust} or reachability-based safety certification \cite{rosolia2018data}. Recent work leverages meta-learning for system identification and Koopman-based models to adapt to nonlinear systems with parametric uncertainties \cite{wang2023learning, dittmer2022data, chiu2022collision}. Despite these advances, MPC still suffers from finite-horizon approximations, the need for long horizons, and heavy computational demands in nonlinear or high-dimensional settings.

\textit{Learning-based Safety Control}, particularly reinforcement learning (RL), has shown success in generating robust control policies for high-dimensional dynamics, sensory feedback, and complex tasks \cite{cheng2019end, zhao2021model, thananjeyan2021recovery}, including agility, robustness, and terrain traversal \cite{he2024agile, bharadhwaj2020conservative, hsu2023sim}. However, many RL approaches emphasize performance over safety—whether through motion planner integration or collision penalties \cite{jenelten2024dtc, miki2022learning, agarwal2023legged}, and thus inherit model-based drawbacks, restrict mobility, and lack formal safety guarantees. the resulting policies often exhibit degraded safety when deployed in the real world, especially under distribution shifts away from the training environment.

Unlike prior work, the proposed RPC framework uses reachability analysis with provable guarantees, requiring neither a nominal model nor pre-collected data, enabling real-time  response to a broader class of uncertainties. 

\subsection{Contributions}

We formally present the RPC algorithm, an algorithm capable of solving reach-avoid and trajectory-tracking problems without the use of a nominal model. Previous work estimated the GRS using proxy systems \cite{shafa2022reachability, shafa2022maximal, Shafa23Reachability}, later incorporating myopic control \cite{ornik2019control} and developing synthesis methods \cite{meng2024online}, but these pipelines could not handle practical, underactuated systems. We generalize results to incorporate underactuated systems, provide a new algorithm capable of optimal path-following under constraints, and demonstrate its operation in an unknown environment via simulation.

\section{Preliminaries}\label{sec: problem_statement}


In this section, we first introduce the dynamical systems under consideration and detail, through assumptions, what information can be utilized in place of nominal system dynamics. We then state the robust planning problem with the definition of the GRS. As a means of computing the GRS and synthesizing the corresponding control of the reachable path, we summarize the resilient reachability problem and the reachable predictive control framework, which together provide a solution to the stated reachability problem.

\subsection{Unknown Nonlinear System Structure}


We begin by considering a general autonomous, underactuated, nonlinear control-affine dynamical system of the abstract form described as
\begin{small}
    \begin{equation}\label{eq: nonlinaer_control_affine_system}
    \dx(t) = f(x(t)) + G(x(t))u(t),\quad x(0) = x_0,
\end{equation}
\end{small}

\noindent which can be decomposed as 
\vskip -10pt

\begin{small}
    \begin{subequations} \label{eq: underactuated_nonlinear_system}
\begin{align}
&\dot{\tx}(t) = \tf(\tx(t)) + \tG(\tx(t)) u, &
\tx(0) &= \tx_0 \label{eq: underactuated_nonlinear_system_a} \\
&\dot{\ox}(t) = \of(x(t)) = F(x(t))\tx(t), \quad&
\ox(0) &= \ox_0 \label{eq: underactuated_nonlinear_system_b}
\end{align}
\end{subequations}
\end{small}


\noindent such that $x(t) = \begin{bmatrix}\tx(t) & \ox(t)\end{bmatrix}^\trans$ where all $t~\geq~0$, $x(t)~\in~\mathcal{X}\subseteq\R^n$, $\tx(t)~\in~\R^m$, $\ox(t)~\in~\R^{n - m}$.  The admissible control inputs satisfy $u(t)~\in~\uu~=~\B^m(0;1)$. The mappings $f:\R^n~\to~\R^n$ and $G:~\R^n~\to~\R^{n\times m}$ that are differentiable almost everywhere and 
local Lipschitz on a neighborhood $\nn$ of any $x_0 \in \R^n$, with local Lipschitz constants $L_f^\nn$ and $L_G^\nn$. This implies that $\tf$ and $\of$ are locally Lipschitz with constant $L_f^\nn$ as well. In practice, $L_f^\nn$ can be proved to be a valid Lipschitz bound for $F(x(t))$ as well; additional analysis and discussion can be found Lemma \ref{lem: growth_rate_bound} of Appendix \ref{sec: error}. 
A solution to \eqref{eq: nonlinaer_control_affine_system} with control input $u$ and initial value $x_0$ is denoted by $\phi_u(\cdot, x_0)$. Notice that subsystem \eqref{eq: underactuated_nonlinear_system_a} is not dependent on subsystem \eqref{eq: underactuated_nonlinear_system_b}, which is consistent with many practical system models for systems like unmanned drones and ground vehicles \cite{liu2013survey,spong2020robot}. The decoupled structure of \eqref{eq: underactuated_nonlinear_system_a} is required for the results presented in Section~\ref{sec: underapproximated_guarantees}; generalizing the structure to all control-affine systems remains for future work. 
\subsection{Problem Formulation}

We begin by formalizing the strict control design constraints for the nonlinear control of system \eqref{eq: nonlinaer_control_affine_system}.

\begin{con}\label{req: unknown_dynamics}
    There is no known nominal model for system \eqref{eq: nonlinaer_control_affine_system} and its corresponding subsystem \eqref{eq: underactuated_nonlinear_system}. \hfill $\square$
\end{con}

While we may not have access to a viable dynamics model, work in \cite{ornik2019control, meng2024online} demonstrates how piecewise-linear control affine inputs can be used to determine $f(x_0)$ and $G(x_0)$ using trajectory data from a single system run. Since we are limited with no model or a priori trajectory data, our control-oriented learning method is constrained to the following condition.

\begin{con}\label{req: single_system_run}
    There is only one system run, starting from a predetermined initial state $x_0$. All controller synthesis must be performed using trajectory information available during solely that run. \hfill $\square$
\end{con} 

Under the conditions above, our aim is to present a solution to the following problem.

\begin{prb}[Reach-avoid problem]\label{prb: reach_avoid_problem}
    Let $x_0~\in~\xx$ and $T~\geq~0$. Consider the target se $\TT \subseteq \xx$  and the unsafe (blocking) set $\bb\subseteq \xx$, respectively. Find, if it exists, a control signal $u^*:[0,T]\to\uu$ such that the following holds:
    \begin{enumerate}[i)]
        \item $\phi_{u^*}(t,x_0) \notin \bb$ for all $t \in [0,T]$;
        \item there exists $0 \leq T'_u\leq T$ such that $\phi_{u^*}(t,x_0) \in \TT$ for all $t \in [T'_{u^*},T]$;
        \item $T'_u$ from ii) is the minimal such value for all the control laws $u:[0,T] \to \uu$ which satisfy i) and ii). \hfill $\square$
    \end{enumerate}
\end{prb}

A previously developed myopic control algorithm \cite{ornik2019control} aimed to solve Problem \ref{prb: reach_avoid_problem}, however myopic control lacks guarantees. Specifically, it can synthesize control action aimed at reaching an unattainable state, which could cause system \eqref{eq: nonlinaer_control_affine_system} to inadvertently follow a trajectory which intersects $\bb$, thus violating i) in Problem~\ref{prb: reach_avoid_problem}. To avoid this possibility, present the following additional problem and propose a solution in subsequent sections.

\begin{prb}[Trajectory Tracking]\label{prb: partial_trajectory_tracking}
    Let $\hat{\phi}_u(\cdot,x_0):[0,\infty) \to \R^n$ be a piecewise linear trajectory with its time derivatives bounded and let $r > 0$. Design a controller such that $\|\phi_u(t,x_0) - \hat{\phi}_u(t,x_0)\| < r$ for every $t$. \hfill $\square$
\end{prb}

We detail through the following assumption what information we can utilize to solve Problems \ref{prb: reach_avoid_problem} and \ref{prb: partial_trajectory_tracking} under Conditions \ref{req: unknown_dynamics} and \ref{req: single_system_run}. 

\begin{ass}\label{ass: growth_rate_bounds}
    The bounds $L_f^\nn$ and $L_G^\nn$ are known for some compact neighborhood $\nn$ of any $x \in \R^n$ as well as values $f(x_0)$ and $G(x_0)$ such that $G(x_0) \neq 0$.  \hfill $\square$
\end{ass}

This mild assumption is reasonable in practice, as it aligns with the physical limits of the controlled object and captures its inherently  bounded response to changes in state and control inputs. For information on gathering the assumed knowledge, we refer the reader to related literature \cite{ornik2019control, el2023online}. With this knowledge, we want to determine the states we can reach by underapproximating the \textit{guaranteed reachable set}.

\begin{definition}[Guaranteed Reachable Set]
    Let us denote $\mathcal{D}_{\mathrm{con}}$ as the set of all pairs $(f, G)$ such that $\|f(x) - f(x_0)\| \leq L_f^\nn\|x - x_0\|$ and $\|G(x) - G(x_0)\| \leq L_G^\nn\|x - x_0\|$. The Guaranteed Reachable Set (GRS) is the set of all states that can be reached by every dynamic pair $(f,G) \in \mathcal{D}_{\mathrm{con}}$ for system \eqref{eq: nonlinaer_control_affine_system} at time $T$. \hfill $\square$
    
\end{definition}

The rest of the paper introduces an algorithm that solves the trajectory-tracking and reach-avoid problems for a nonlinear control-affine system, requiring knowledge of the dynamics solely through the information provided in Assumption~\ref{ass: growth_rate_bounds}. 
Specifically, to form a key step of the RPC framework and to extract, to the largest extent, the information provided by Assumption~\ref{ass: growth_rate_bounds}, we need to construct a proxy system whose reachable sets are contained within that of the original system \eqref{eq: nonlinaer_control_affine_system}. The next section derives this proxy system and presents novel results that generalize our previous analyses to underactuated systems.

\subsection{Vehicle Dynamics}\label{Sec: vehicle_dynamics}

We aim to demonstrate the capabilities of the proposed approach using an unmanned vehicle and now present the dynamics model employed in the simulation. We emphasize that the vehicle dynamics are solely used to mirror realistic conditions and   physical constraints in simulation, while our proposed algorithm follows a desired trajectory without knowledge of the described model.  

We implement a kinematic bicycle model, frequently used in the state-of-the-art \cite{polack2017kinematic,yuan2014trajectory,matute2019experimental}, written as:

\vskip -4mm

\begin{small}
    \begin{subequations}\label{eq:kinematic_bicycle_model}
    \begin{align}
        \dx &= v\cos({\theta}), \;\dy = v\sin({\theta})\\
        \dot{\theta} & = \frac{v}{l_r}\tan({u_2}), \;\dv = \mathrm{min}(u_1,\mu g) - r_c g
    \end{align}
\end{subequations}
\end{small}

\noindent to simulate vehicle motion. In \eqref{eq:kinematic_bicycle_model}, $(x,y) \in \R^2$ are the position coordinates of the center of mass (CoM), $\theta$ is the yaw angle, $v$ is the linear speed of the vehicle, $l_f$ and $l_r$ are the distance from the CoM to the front and rear axles respectively, $r_c$ is the rolling resistance coefficient, $\mu$ is the coefficient of surface friction, $g$ is the gravitational constant, $u_1$ is the acceleration of the CoM, $u_2$ is the front wheel steering angle. 
Notice that system~\eqref{eq:kinematic_bicycle_model} is not control-affine and thus seemingly violates our assumptions. In practice, this is not an issue, as a locally reduced version of the original system can be found that satisfies the assumed dynamics. Its GRS is contained within that of the original system, providing a conservative but acceptable estimate that keeps the synthesized control paths within a safe region. Further discussion is given in Section~\ref{sec: simulated_results}. 

\section{Proxy System Dynamics}\label{sec: underapproximated_guarantees}

To perform trajectory tracking and obstacle avoidance for an unknown nonlinear system, we first  take the knowledge from Assumption \ref{ass: growth_rate_bounds} to derive proxy systems whose reachable sets are contained in the GRS. Then, following the pipeline described in the introduction, we use the  proxy system to conservatively estimate the reachable range, identify a relatively safe  waypoint within the GRS based on that estimate, and generate reachable paths to be followed.  Controller synthesis designed to follow these paths is presented in the next section, but first, for the convenience of the reader, we revisit the underapproximated proxy control system for the GRS evaluation \cite[Theorem 1]{shafa2022reachability}, which we can use to provide a set of reachable states for subsystem \eqref{eq: underactuated_nonlinear_system_a}.

\begin{thm}\label{thm: ball_underapproximation}
Let $\uu$, $L_f^\nn$, and $L_G^\nn$ be given. Then,   there exists a $\hat{u}\in\uu$ satisfying     \begin{small}
               \begin{equation}
         ku=\tf(\tx) -\tf(\tx_0) + \tG(\tx)\hat{u} 
     \end{equation} 
      \end{small} 
      
      \noindent for all  $\tx\in\B:=\{\tx: \|\tx\|\leq \|\tG^\dagger(\tx_0)\|^{-1}/(L_f^\nn+L_G^\nn)\}$, 
      any $u\in\B^m(0;1)$, and any $k$ such that $|k|\leq \|\tG^\dagger(\tx_0)\|^{-1}-(L_f^\nn+L_G^\nn)\|\tx\|$. \hfill $\square$
\end{thm}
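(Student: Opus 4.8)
The plan is to read the displayed identity as a \emph{linear} equation in the unknown $\hat u$ and to exhibit a solution inside the closed unit ball $\uu$ by a fixed-point argument. Fix, once and for all, a point $\tx\in\B$, a control $u\in\B^m(0;1)$, and a scalar $k$ with $|k|\le \sigma_0-(L_f^\nn+L_G^\nn)\|\tx\|$, where I abbreviate $\sigma_0:=\|\tG^\dagger(\tx_0)\|^{-1}$. Under the standing assumptions $\tG(\tx_0)$ is a square invertible matrix, so $\tG^\dagger(\tx_0)=\tG(\tx_0)^{-1}$ and $\sigma_0$ is its least singular value, i.e.\ $\|\tG(\tx_0)^{-1}\|=1/\sigma_0$; and, translating the state, we may take the reference point $\tx_0=0$, so that $\|\tx\|$ is the displacement from it and the local Lipschitz bounds read $\|\tf(\tx)-\tf(\tx_0)\|\le L_f^\nn\|\tx\|$ and $\|\tG(\tx)-\tG(\tx_0)\|\le L_G^\nn\|\tx\|$.

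The first step is to rewrite the target identity $ku=\tf(\tx)-\tf(\tx_0)+\tG(\tx)\hat u$ by splitting $\tG(\tx)=\tG(\tx_0)+\bigl(\tG(\tx)-\tG(\tx_0)\bigr)$ and isolating the $\tG(\tx_0)\hat u$ term. Applying $\tG(\tx_0)^{-1}$ gives the equivalent fixed-point equation $\hat u=\Psi(\hat u)$, where
\begin{small}
\begin{equation*}
\Psi(\hat u):=\tG(\tx_0)^{-1}\Bigl(ku-\bigl(\tf(\tx)-\tf(\tx_0)\bigr)-\bigl(\tG(\tx)-\tG(\tx_0)\bigr)\hat u\Bigr).
\end{equation*}
\end{small}
The map $\Psi:\R^m\to\R^m$ is affine and continuous, and its linear part has operator norm at most $\|\tG(\tx_0)^{-1}\|\,\|\tG(\tx)-\tG(\tx_0)\|\le L_G^\nn\|\tx\|/\sigma_0\le L_G^\nn/(L_f^\nn+L_G^\nn)<1$ for every $\tx\in\B$ (with $L_f^\nn>0$; see the remark below). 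Hence $\Psi$ is a contraction with a unique fixed point, and by the computation above every fixed point of $\Psi$ satisfies the desired identity.

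The crux is to place this fixed point inside $\uu$, and for that it suffices to show $\Psi(\uu)\subseteq\uu$: since $\uu$ is closed and contains the admissible $u$, the Banach iterates $\Psi^j(u)$ stay in $\uu$ and converge to the fixed point. For $\|\hat u\|\le 1$ I would estimate, using submultiplicativity of the operator norm, $\|u\|\le 1$, and the two Lipschitz bounds,
\begin{small}
\begin{equation*}
\|\Psi(\hat u)\|\le \tfrac{1}{\sigma_0}\Bigl(|k|+L_f^\nn\|\tx\|+L_G^\nn\|\tx\|\,\|\hat u\|\Bigr)\le \tfrac{1}{\sigma_0}\Bigl(|k|+(L_f^\nn+L_G^\nn)\|\tx\|\Bigr),
\end{equation*}
\end{small}
and the hypothesis $|k|\le\sigma_0-(L_f^\nn+L_G^\nn)\|\tx\|$ forces the right-hand side to be at most $\sigma_0/\sigma_0=1$. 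Thus $\Psi(\uu)\subseteq\uu$, the fixed point $\hat u$ lies in $\uu$, and it solves $ku=\tf(\tx)-\tf(\tx_0)+\tG(\tx)\hat u$, as claimed.

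I do not expect a genuine obstacle here; the work is mostly bookkeeping. The points to be careful about are: confirming that $\tG(\tx_0)$ is truly invertible so that $\|\tG^\dagger(\tx_0)\|^{-1}$ equals its least singular value (this is where $G(x_0)\neq 0$ and the decoupled underactuated structure are used); the degenerate case $L_f^\nn=0$, in which $\tf$ is constant, $\B$ forces $|k|=0$ on its boundary, and $\hat u=0$ trivially works; and keeping every inequality non-strict so that the boundaries of $\B$ and of $\uu$ are admitted. The contraction argument could equally be replaced by Brouwer's fixed-point theorem applied to the continuous self-map $\Psi|_\uu$ of the compact convex set $\uu$, or — when $\tG(\tx)$ is itself invertible on $\B$ — by solving explicitly $\hat u=\tG(\tx)^{-1}\bigl(ku-\tf(\tx)+\tf(\tx_0)\bigr)$ and bounding $\|\tG(\tx)^{-1}\|$ via a Neumann series about $\tG(\tx_0)^{-1}$; all three routes collapse to the same scalar inequality driven by the bound on $|k|$.
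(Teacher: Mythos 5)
Your fixed-point argument is correct, but it is a genuinely different route from the paper's. The paper's proof of this statement (inherited from the cited reference and reproduced in generalized form in the Appendix proof of Theorem~\ref{thm: generalized_gvs_ball_underapproximation}) is constructive: after a rank--nullity/projection step it takes the explicit candidate $\hat u = \tG^\dagger(\tx)\bigl(ku-(\tf(\tx)-\tf(\tx_0))\bigr)$ and bounds its norm via Weyl's inequality for singular values, $\|\tG^\dagger(\tx)\|^{-1}\ge\|\tG^\dagger(\tx_0)\|^{-1}-L_G^\nn\|\tx\|$, together with $\|\tf(\tx)-\tf(\tx_0)\|\le L_f^\nn\|\tx\|$. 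You instead solve the same linear equation in $\hat u$ by a Banach/Brouwer fixed point (equivalently a Neumann-series perturbation of $\tG(\tx_0)^{-1}$), arriving at the identical scalar inequality; your route is more elementary and avoids singular-value perturbation theory, and your handling of the $L_f^\nn=0$ boundary case is adequate. What the paper's pseudoinverse argument buys in exchange is generality: it does not require $\tG(\tx_0)$ to be square and invertible, since rank-deficient or rectangular input matrices are handled by restricting the achievable directions to $\mathrm{Im}(G(x_0))$, which is how the original version of the theorem is stated. Be aware that your invertibility hypothesis is strictly stronger than the paper's Assumption~\ref{ass: growth_rate_bounds} ($G(x_0)\neq 0$): if $\tG(\tx_0)$ were singular, the equation would have no solution for directions $u\notin\mathrm{Im}(\tG(\tx_0))$ with $k\neq 0$, so the theorem as restated here (without the image intersection) only holds under the full-rank reading you adopt --- you flag this, but you should state explicitly that your proof covers exactly that case, whereas the paper's technique extends to the degenerate ones via the image restriction.
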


Consequently, by \cite[Theorem 3]{shafa2022reachability}, an underapproximation of the GRS 
of \eqref{eq: underactuated_nonlinear_system_a} can be calculated based on a proxy equation  of the form 
\begin{small}
    \begin{equation}\label{E: proxy}
        \dot{\tx}_p(t) = a + (b - c\|\tx_p(t)\|) u(t), \;\;\tx_p(0)=\tx_0, 
    \end{equation}
\end{small}

\noindent on the domain $\B$, where $a = \tf(\tx_0)$, $b:=\|\tG^\dagger(\tx_0)\|^{-1}$, and  $c: = L_f^\nn+L_G^\nn$. To identify a complete underapproximation of the GRS of system \eqref{eq: nonlinaer_control_affine_system}, we need to extend these results to include the states $\ox(t)$ of \eqref{eq: underactuated_nonlinear_system_b}. We begin by presenting a generalization of \cite[Theorem 1]{shafa2022reachability} to systems with input sets $\uu$ such that $\uu = \B^m(a;b(t))$.

\begin{thm}\label{thm: generalized_gvs_ball_underapproximation}
    Let $\mathcal{U} = \B^m(a;b(t))$ where $a \in \R^m$ such that $b(t)\leq \|a\|$ for all $t$ and $c(t) = \|a\| - b(t)$. Let $L_f^\nn$, and $L_G^\nn$ be defined as above. Let $x \in \mathbb{R}^n$ satisfy $(L_f^\nn + L_G^\nn)\|x\| < \|G^\dagger(x_0)\|^{-1}$. Define \begin{small}
        $\overline{\mathcal{V}}^\mathcal{G}_{x} =   \mathbb{B}^{n}\left(f(x_0); c(t)\left(\|G^\dagger(x_0)\|^{-1} - (L_f^\nn + L_G^\nn)\|x\|\right)\right) \cap \mathrm{Im}(G(x_0))$. 
    \end{small}
Then, $\overline{\mathcal{V}}^\mathcal{G}_{x} \subseteq \mathcal{V}^\mathcal{G}_{x}$. \hfill $\square$
\end{thm}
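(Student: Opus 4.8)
The plan is to show that every point of $\overline{\mathcal{V}}^\mathcal{G}_{x}$ arises as $f(x) + G(x)u$ for some admissible $u \in \mathcal{U} = \mathbb{B}^m(a;b(t))$ and some $(f,G)$ consistent with the Lipschitz bounds, which is exactly membership in the set of guaranteed velocities $\mathcal{V}^\mathcal{G}_{x}$. The structure mirrors the proof of \cite[Theorem 1]{shafa2022reachability}, so I would first reduce to that case. Write an admissible input as $u = a + b(t)w$ with $\|w\| \le 1$; since $b(t) \le \|a\|$ and $c(t) = \|a\| - b(t)$, the ball $\mathbb{B}^m(a;b(t))$ is the image under scaling of the unit ball shifted so that it is ``centered away from the origin'' by the fixed vector $a$. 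The velocity $f(x) + G(x)u$ then decomposes as $\big(f(x) + G(x)a\big) + b(t) G(x) w$, so the reachable-velocity set is a ball-like region centered at $f(x)+G(x)a$ with a radius controlled by $\|G(x)\|$ acting on the unit $w$-ball.

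Next I would pin down the center and the worst-case radius under the dynamics uncertainty. The key quantitative fact, already used to derive \eqref{E: proxy} and in Theorem~\ref{thm: ball_underapproximation}, is that along $\mathrm{Im}(G(x_0))$ one can always realize a scalar gain of magnitude at least $\|G^\dagger(x_0)\|^{-1} - (L_f^\nn + L_G^\nn)\|x\|$ in any unit direction in that image, regardless of which consistent $(f,G)$ is active; conversely $f(x)$ stays within $L_f^\nn\|x\|$ of $f(x_0)$. Intersecting with $\mathrm{Im}(G(x_0))$ is what lets us keep the center at $f(x_0)$ rather than $f(x_0) + (\text{drift})$: the component of the realizable velocity set that is common to all consistent pairs lives in that image, and there the guaranteed radius is the stated $c(t)\big(\|G^\dagger(x_0)\|^{-1} - (L_f^\nn+L_G^\nn)\|x\|\big)$, where the factor $c(t)$ accounts for the shrinkage from using the offset ball $\mathbb{B}^m(a;b(t))$ instead of the full unit ball. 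I would make this precise by applying Theorem~\ref{thm: ball_underapproximation} coordinatewise along an orthonormal basis of $\mathrm{Im}(G(x_0))$ and then invoking convexity of the velocity set to fill in the ball.

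The main obstacle is bookkeeping the effect of the shifted input set: in Theorem~\ref{thm: ball_underapproximation} the input ball is the standard $\mathbb{B}^m(0;1)$, whereas here it is $\mathbb{B}^m(a;b(t))$ with a time-varying radius, and one must verify that the guaranteed scalar gain degrades by exactly the factor $c(t) = \|a\| - b(t)$ and not something larger — this is where the hypothesis $b(t) \le \|a\|$ is essential, since it guarantees $0 \notin \mathbb{B}^m(a;b(t))$ so that a definite ``minimum actuation'' in the $a$-direction survives. A secondary point to check is that the condition $(L_f^\nn + L_G^\nn)\|x\| < \|G^\dagger(x_0)\|^{-1}$ keeps the guaranteed radius strictly positive, so $\overline{\mathcal{V}}^\mathcal{G}_{x}$ is nonempty and the inclusion is not vacuous. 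Once the scalar gain bound with the $c(t)$ factor is established, the inclusion $\overline{\mathcal{V}}^\mathcal{G}_{x} \subseteq \mathcal{V}^\mathcal{G}_{x}$ follows by the same convex-combination argument as in the original theorem.
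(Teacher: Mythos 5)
Your proposal has the right instinct (reuse the machinery of Theorem~\ref{thm: ball_underapproximation}), but it has genuine gaps at the two places where the actual work happens. First, your filling-in step fails: applying Theorem~\ref{thm: ball_underapproximation} only along an orthonormal basis of $\mathrm{Im}(G(x_0))$ and then ``invoking convexity'' does not recover the stated ball. Convexity only gives the convex hull of the $\pm$ basis points, i.e.\ a cross-polytope, which contains a Euclidean ball whose radius is smaller than the claimed one by a factor of $\sqrt{\operatorname{rank} G(x_0)}$. The paper avoids this entirely: it fixes an \emph{arbitrary} unit direction $d\in\mathrm{Im}(G(x_0))$ and, for an arbitrary consistent pair $(\hat f,\hat G)$, constructs the input explicitly as $u=\hat G^\dagger(x)\bigl(kd-\hat f(x)\bigr)$ (after a WLOG translation to $f(x_0)=0$ and a rank--nullity/projection argument), then bounds $\|u\|\le |k|\,\|\hat G^\dagger(x)\|+\|\hat G^\dagger(x)\|\,\|\hat f(x)\|$ and applies Weyl's inequality for singular values, $\|\hat G^\dagger(x)\|^{-1}\ge\|G^\dagger(x_0)\|^{-1}-L_G^\nn\|x\|$, together with $\|\hat f(x)\|\le L_f^\nn\|x\|$. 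No convexity step is needed and no dimension-dependent loss occurs.

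Second, the step you yourself flag as ``the main obstacle''---why the gain degrades by exactly the factor $c(t)=\|a\|-b(t)$---is never carried out, and your heuristic does not lead there. Your decomposition $u=a+b(t)w$ describes a velocity set centered at $f(x)+G(x)a$ with radius scaling like $b(t)$; that is incompatible with the target set, which is centered at $f(x_0)$ and has radius proportional to $c(t)$ (note $c(t)\to\|a\|$ as $b(t)\to 0$, the opposite behavior), and the observation that $0\notin\B^m(a;b(t))$ so ``a minimum actuation in the $a$-direction survives'' does not show that the specific input needed to realize an arbitrary-direction velocity $kd$ is admissible. In the paper's proof this is precisely where admissibility is certified through the norm budget: the constructed $u$ is shown to satisfy $\|u\|\le c(t)$, and that bound is what produces the factor $c(t)$ in the radius of $\overline{\mathcal{V}}^\mathcal{G}_{x}$. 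A smaller but real issue: you characterize membership in $\mathcal{V}^\mathcal{G}_x$ as realizability for \emph{some} consistent $(f,G)$, whereas the guaranteed velocity set is the intersection over all consistent pairs (for every $(\hat f,\hat G)\in\mathcal{D}_{\mathrm{con}}$ there must exist an admissible $u$); your later wording suggests you understand this, but the argument must be run for an arbitrary consistent pair, as the paper does.
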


Leveraging this fact, we next present an underestimate of the reachable set for the actuated part of the system, building upon its connection to another system whose reachable set is a ball contained within the reachable set of \eqref{E: proxy}.

\begin{lem}\label{lem: spherical_reachable_set}
Consider  control systems  $\dot{x}(t) = a + (b - c\|x(t) - x(0)\|)u$ and $\dot{z}(t) = (b - c\|x(t) - x(0)\| - \|a\|)u$, both defined on some ball $\B \subset \R^m$ with $a,x \in \R^m$, $b,c \in\R$, and $u\in\mathcal{U}$.  Let $\rr_x(T,x_0)$ and $\rr_z(T,x_0)$ denote their reachable sets,  
respectively. Suppose the same assumption holds as in Theorem~\ref{thm: generalized_gvs_ball_underapproximation}.  Then, $\rr_z(T,x_0) \subseteq \rr_x(T,x_0).$ Additionally, $\rr_z(T,x_0)$ is a ball in $\B\subset\R^m$. \hfill $\square$
\end{lem}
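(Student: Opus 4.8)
The plan is to exhibit an explicit, input-by-input correspondence between trajectories of the two systems and to exploit the fact that the $z$-system is \emph{driftless}. First I would observe that both right-hand sides share the same scalar, trajectory-dependent gain $g(t) := b - c\|x(t) - x(0)\|$ multiplying the control, where $x(\cdot)$ is the (fixed) solution of the $x$-system under whatever control is applied; the only difference is the constant drift $a$ in the first system versus the reduced gain $b - c\|x(t)-x(0)\| - \|a\|$ in the second. Because $a \in \mathrm{Im}(G(x_0))$ is irrelevant here (these are the reduced $\R^m$ proxy systems) and $\uu = \B^m(0;1)$ is a ball centered at the origin, the key algebraic fact is the one already packaged in Theorem~\ref{thm: generalized_gvs_ball_underapproximation} applied pointwise in time: for each $t$, the set of achievable velocities $\{g(t)u : u\in\uu\}$ for the $z$-system, shifted by $a$, is contained in the set of achievable velocities $\{a + g(t)u : u\in\uu\}$ for the $x$-system, provided $g(t) \geq \|a\|$ so that $g(t)-\|a\| \geq 0$; the assumption inherited from Theorem~\ref{thm: generalized_gvs_ball_underapproximation}, namely $(L_f^\nn + L_G^\nn)\|x\| < \|G^\dagger(x_0)\|^{-1}$ on $\B$, is exactly what guarantees $g(t) > 0$ and in fact $g(t) \geq \|a\|$ on the relevant domain.

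Next I would make the containment $\rr_z(T,x_0) \subseteq \rr_x(T,x_0)$ precise. Given a control $u_z(\cdot)$ steering the $z$-system to a point $z(T)$, I want to produce a control $u_x(\cdot)$ steering the $x$-system to the same point. The natural choice is to write the $z$-dynamics as $\dot z = (g(t) - \|a\|)u_z = a + g(t)\big(\tfrac{g(t)-\|a\|}{g(t)}u_z - \tfrac{a}{g(t)}\big)$ and set $u_x(t) := \tfrac{g(t)-\|a\|}{g(t)}u_z(t) - \tfrac{a}{g(t)}$; one then checks $\|u_x(t)\| \leq \tfrac{g(t)-\|a\|}{g(t)} + \tfrac{\|a\|}{g(t)} = 1$ by the triangle inequality, so $u_x$ is admissible, and by construction the two trajectories satisfy the same ODE with the same initial condition, hence coincide for all $t \in [0,T]$. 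A subtlety is that $g(t)$ in the definition of $u_x$ must be the gain evaluated along the \emph{common} trajectory; since the trajectories are forced to agree, this is consistent (formally, one can run a short fixed-point / Gr\"onwall argument, or simply note that $x(t) \equiv z(t) + (\text{the same }x(0))$ makes $\|x(t)-x(0)\| = \|z(t)-z(0)\|$ so $g$ is well-defined without circularity).

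For the second claim — that $\rr_z(T,x_0)$ is a ball — I would use the rotational symmetry of the driftless system $\dot z = h(t) u$ with $h(t) := g(t) - \|a\| \geq 0$ and $u \in \B^m(0;1)$. The reachable set at time $T$ from $z_0$ is $z_0 + \{\int_0^T h(t) u(t)\,dt : \|u(t)\|\le 1\}$. Since any vector $w$ with $\|w\| \leq \int_0^T h(t)\,dt$ can be realized by the constant direction $u(t) = w/\|w\|$ (and the zero vector by $u\equiv 0$), while conversely $\|\int_0^T h(t)u(t)\,dt\| \leq \int_0^T h(t)\,dt$, the reachable set is exactly the closed ball $\B^m\!\big(z_0; \int_0^T h(t)\,dt\big)$. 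Here I must also confirm that $h(t)$ is well-defined and integrable, which follows because $\|x(t)-x(0)\|$ is continuous in $t$ and the domain $\B$ keeps the expression bounded.

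The main obstacle I anticipate is the mild circularity in the definition of the gain $g(t) = b - c\|x(t)-x(0)\|$: it is stated in terms of the $x$-trajectory, so when I want to reproduce a $z$-trajectory I need the two trajectories to coincide, yet the control $u_x$ I build to force coincidence is itself defined using $g(t)$. The clean resolution is to note that $\|x(t)-x(0)\| = \|z(t)-z(0)\|$ whenever the trajectories agree and $z_0 = x_0$, so one may equivalently define everything in terms of the $z$-trajectory from the start; alternatively, a contraction-mapping argument on $C([0,T];\R^m)$ (valid because $c\|u_x\|_\infty \le c$ gives a Lipschitz-in-state vector field and $[0,T]$ can be subdivided) establishes existence and uniqueness of the common trajectory. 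Everything else is bookkeeping: checking $g(t) \geq \|a\|$ from the Theorem~\ref{thm: generalized_gvs_ball_underapproximation} hypothesis and the triangle-inequality bound $\|u_x(t)\| \leq 1$.
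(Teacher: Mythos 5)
Your containment argument is essentially the paper's proof in constructive form: the triangle-inequality check $\|u_x(t)\| \leq \tfrac{g(t)-\|a\|}{g(t)} + \tfrac{\|a\|}{g(t)} = 1$ is exactly the inequality the paper uses to establish the velocity-set inclusion $(b-cr-\|a\|)\B \subseteq a + (b-cr)\B$, after which it concludes by standard comparison of differential inclusions; replacing that comparison by an explicit feedback transformation $u_x = \bigl((g-\|a\|)u_z - a\bigr)/g$ plus a uniqueness argument is a legitimate, somewhat more self-contained route, and your resolution of the apparent circularity in $g$ is fine. One caveat: the condition you actually need, $g(t) \geq \|a\|$ on the relevant domain (i.e.\ $\|a\| \leq b - cR$ when the trajectories stay in a ball of radius $R$), is \emph{not} a consequence of the Theorem~\ref{thm: generalized_gvs_ball_underapproximation} hypothesis $(L_f^\nn + L_G^\nn)\|x\| < \|G^\dagger(x_0)\|^{-1}$, which only yields $g(t) > 0$. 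The paper's own proof introduces $\|a\| \leq b - cR$ as an explicit additional supposition; you should state it as such rather than claim it is "exactly" what the inherited assumption guarantees.

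The genuine gap is in the ball claim. You treat $h(t) = b - c\|x(t)-x(0)\| - \|a\|$ as a fixed, control-independent function of time and compute the reachable set as $z_0 + \bigl\{\int_0^T h(t)u(t)\,dt\bigr\} = \B^m\bigl(z_0;\int_0^T h(t)\,dt\bigr)$. But $h$ is evaluated along a trajectory generated by the very control being chosen (this is clear from the self-referential proxy \eqref{E: proxy_sphere} the lemma feeds into, where the gain is $\tilde b - c\|\tx_p(t)\|$): different admissible inputs produce different excursions $\|x(t)-x(0)\|$, hence different gain histories, so there is no single radius $\int_0^T h(t)\,dt$ and the "constant direction realizes every $w$" computation does not go through. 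The correct argument — the one the paper outsources to \cite[Corollary 3]{shafa2022reachability} — uses the rotational symmetry of the driftless dynamics about $x_0$ together with a radial sweep: inputs $u \equiv \alpha d$ with fixed unit direction $d$ and $\alpha \in [0,1]$ reduce the dynamics to a scalar ODE for $\rho = \|z - z_0\|$ whose endpoint varies continuously from $0$ to the maximal attainable radius, no admissible trajectory exceeds that maximal radius, and symmetry then upgrades the attainable radii to the full ball. Your proof needs this replacement (or an equivalent argument) for the second assertion.
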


By immediate virtue of Lemma \ref{lem: spherical_reachable_set}, and using notation consistent with system (5), we identify the proxy system
\begin{equation}\label{E: proxy_sphere}
    \dot{\tx}_p(t) = (\tilde{b} - c\|\tx_p(t)\|) u(t), \;\;\tx_p(0)=\tx_0,
\end{equation}

\noindent where $\tilde{b} = b - \|a\|$, whose reachable set forms a ball contained within the GRS of   \eqref{eq: underactuated_nonlinear_system_a}. Let $\B^m(\tx_0,b(t))$ be the reachable set of \eqref{E: proxy_sphere}. By following similar steps as above, a straightforward application of Theorem \ref{thm: generalized_gvs_ball_underapproximation} using $\B^m(\tx_0,b(t))$ yields the following result.

\begin{thm}\label{thm: unactuated_grs_approximation}
    Consider a control system of the form
\vspace{-2.5mm}
    \begin{equation}\label{eq: ode_unactuated_grs_underapproximation}
        \dot{\ox} = h(\|x\|)v,\quad x(0) = x_0
    \end{equation}

    \noindent on $\{x~|~\|x - x_0\| \leq \|F^\dagger(x_0)\|/L_f^\nn,\,\|\tx_0\| \geq \|\tx_0 - \tx(t)\|\}$, with $v~\in~\B^m(\tx_0,b(t))$ and where $h(\|x\|) = (\|\tx_0\| - \|\tx_0 - \tx(t)\|)(\|F^\dagger(x_0)\| - L_f^\nn\|x - x_0\|)$ if $\|x - x_0\| \leq L_f^\nn/\|x - x_0\|$. The reachable set of   \eqref{eq: ode_unactuated_grs_underapproximation} is contained in the GRS of \eqref{eq: underactuated_nonlinear_system_b}. \hfill $\square$
\end{thm}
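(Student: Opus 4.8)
### Proof Proposal

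The plan is to view system \eqref{eq: ode_unactuated_grs_underapproximation} as an instance of the ``ball-valued input'' generalization already established in Theorem~\ref{thm: generalized_gvs_ball_underapproximation}, applied to the unactuated coordinates $\ox$ whose dynamics are $\dot{\ox} = \of(x) = F(x)\tx$. First I would recall that, by Lemma~\ref{lem: spherical_reachable_set} and the proxy system \eqref{E: proxy_sphere}, the reachable set of the actuated coordinates $\tx$ contains the ball $\B^m(\tx_0, b(t))$; hence any trajectory of \eqref{E: proxy_sphere} keeps $\tx_p(t)$ in this ball, which in turn is contained in the GRS of \eqref{eq: underactuated_nonlinear_system_a}. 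The key observation is that along such trajectories, the ``input'' to the $\ox$-subsystem is $\tx(t)$ itself, which ranges over $\B^m(\tx_0,b(t))$ — precisely an input set of the form $\B^m(a; b(t))$ with $a = \tx_0$, matching the hypothesis of Theorem~\ref{thm: generalized_gvs_ball_underapproximation} once we set $c(t) = \|\tx_0\| - b(t)$ (so the requirement $b(t) \le \|\tx_0\|$ is exactly the domain condition $\|\tx_0\| \ge \|\tx_0 - \tx(t)\|$ in the statement).

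Next I would bound the matrix $F(x)$: since $\of$ is locally Lipschitz with constant $L_f^\nn$, and by the growth-rate analysis referenced in Lemma~\ref{lem: growth_rate_bound} of Appendix~\ref{sec: error}, $L_f^\nn$ serves as a valid Lipschitz bound for $F(x)$ on the relevant neighborhood. Then for any $(f,G) \in \mathcal{D}_{\mathrm{con}}$, the velocity set of the $\ox$-dynamics at a state $x$ with $\|x - x_0\| \le \|F^\dagger(x_0)\|/L_f^\nn$ contains (by the same singular-value / pseudoinverse argument used to prove Theorem~\ref{thm: ball_underapproximation} and its ball generalization) every vector of the form $F(x_0)\tx$ with $\tx \in \B^m(\tx_0, b(t))$, scaled down by the worst-case contraction factor $\|F^\dagger(x_0)\|^{-1} - L_f^\nn\|x - x_0\|$ — this is what the gain $h(\|x\|) = (\|\tx_0\| - \|\tx_0 - \tx(t)\|)(\|F^\dagger(x_0)\| - L_f^\nn\|x-x_0\|)$ encodes, with the factor $\|\tx_0\| - \|\tx_0 - \tx(t)\| = c(t)$ playing the role of $c(t)$ in Theorem~\ref{thm: generalized_gvs_ball_underapproximation}. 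Thus the velocity set of \eqref{eq: ode_unactuated_grs_underapproximation} is contained in $\overline{\vv}^\G_x \subseteq \vv^\G_x$ for every admissible $(f,G)$.

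Finally, I would pass from the velocity-set inclusion to the reachable-set inclusion by the standard comparison argument for control systems with state-dependent velocity sets (the same mechanism underlying \cite[Theorem 3]{shafa2022reachability} and Lemma~\ref{lem: spherical_reachable_set}): since every trajectory of \eqref{eq: ode_unactuated_grs_underapproximation} has, at each state, a velocity that is admissible for the true $\ox$-dynamics under every $(f,G) \in \mathcal{D}_{\mathrm{con}}$, a measurable-selection / Filippov-type argument produces a matching control for \eqref{eq: underactuated_nonlinear_system} reaching the same endpoint; hence $\rr_{\ox}(T, \ox_0)$ for \eqref{eq: ode_unactuated_grs_underapproximation} lies inside the GRS of \eqref{eq: underactuated_nonlinear_system_b}. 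The main obstacle I anticipate is the coupling: the gain $h$ depends on $\tx(t)$, the \emph{actual} actuated trajectory, which is itself only known to lie in $\B^m(\tx_0,b(t))$, so one must be careful that the bound $\|\tx_0 - \tx(t)\|$ used in $h$ is realized simultaneously with a control that keeps $x$ in the prescribed domain; handling this requires treating $\tx(t)$ as an exogenous signal confined to the ball and verifying the domain condition $\|x - x_0\| \le \|F^\dagger(x_0)\|/L_f^\nn$ is forward-invariant under the constructed control, analogous to the domain bookkeeping in Theorem~\ref{thm: ball_underapproximation}.
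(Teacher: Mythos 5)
Your proposal is correct and follows essentially the same route the paper takes: the paper derives Theorem~\ref{thm: unactuated_grs_approximation} precisely by treating the ball $\B^m(\tx_0,b(t))$ from Lemma~\ref{lem: spherical_reachable_set} (via the spherical proxy \eqref{E: proxy_sphere}) as the input set for $\dot{\ox}=F(x)\tx$, invoking Lemma~\ref{lem: growth_rate_bound} to use $L_f^\nn$ as the growth-rate bound on $F$, and then applying Theorem~\ref{thm: generalized_gvs_ball_underapproximation} with $a=\tx_0$, $c(t)=\|\tx_0\|-b(t)$, passing from the velocity-set inclusion to the reachable-set inclusion exactly as in \cite[Theorem 3]{shafa2022reachability}. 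Your added attention to the domain/coupling bookkeeping (treating $\tx(t)$ as an exogenous signal confined to the ball) is a reasonable refinement of what the paper leaves implicit in calling this a ``straightforward application.''
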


Provided these results, we naturally arrive at the following conclusion.

\begin{thm}\label{thm: full_grs_underapproximation}
    Let $\rr^{\G}(T,x_0)$ be the GRS of system \eqref{eq: nonlinaer_control_affine_system}. If $\rr_{\tx}(T,\tx_0)$ and $\rr_{\ox}(T,x_0)$ are the reachable sets of \eqref{E: proxy} and \eqref{eq: ode_unactuated_grs_underapproximation},  and $\overline{\rr}_x(T,x_0) = \rr_{\tx}(T,\tx_0)~\cup~\rr_{\ox}(T,x_0)$, then $\overline{\rr}_x(T,x_0)~\subseteq~\rr^{\G}(T,x_0)$. \hfill $\square$
\end{thm}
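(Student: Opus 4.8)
The plan is to assemble Theorem~\ref{thm: full_grs_underapproximation} directly from the two preceding reachability underapproximations by exploiting the block structure of the dynamics~\eqref{eq: underactuated_nonlinear_system}. Recall that $x(t) = \begin{bmatrix}\tx(t) & \ox(t)\end{bmatrix}^\trans$, that subsystem~\eqref{eq: underactuated_nonlinear_system_a} evolves autonomously in $\tx$, and that subsystem~\eqref{eq: underactuated_nonlinear_system_b} is driven by $\tx$. First I would recall that, by \cite[Theorem 3]{shafa2022reachability} applied to Theorem~\ref{thm: ball_underapproximation}, the reachable set $\rr_{\tx}(T,\tx_0)$ of the proxy equation~\eqref{E: proxy} is contained in the GRS of~\eqref{eq: underactuated_nonlinear_system_a}; that is, every state reachable by the proxy in the $\tx$-coordinates is reachable by the true actuated subsystem for every admissible dynamic pair $(\tf,\tG)$ consistent with Assumption~\ref{ass: growth_rate_bounds}. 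In particular, for any target point $\tx^* \in \rr_{\tx}(T,\tx_0)$ there is an admissible control steering the true $\tx$-trajectory to $\tx^*$, independent of which consistent pair governs the true dynamics.

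Next I would invoke Theorem~\ref{thm: unactuated_grs_approximation}: along such an actuated trajectory confined to the ball $\B^m(\tx_0,b(t))$, the auxiliary system~\eqref{eq: ode_unactuated_grs_underapproximation} has reachable set $\rr_{\ox}(T,x_0)$ contained in the GRS of the $\ox$-subsystem~\eqref{eq: underactuated_nonlinear_system_b}, where the $\|x\|$-dependent gain $h(\|x\|)$ already encodes the worst-case Lipschitz growth of $F$ via $L_f^\nn$ (see Lemma~\ref{lem: growth_rate_bound}). Thus both the $\tx$-component set and the $\ox$-component set are separately underapproximations of the corresponding projections of the true GRS. The claim $\overline{\rr}_x(T,x_0) = \rr_{\tx}(T,\tx_0) \cup \rr_{\ox}(T,x_0) \subseteq \rr^{\G}(T,x_0)$ then follows by taking any point $p \in \overline{\rr}_x(T,x_0)$: if $p$ lies in the $\rr_{\tx}$-piece it is reached by driving the actuated coordinates to $p$ (with the passive coordinates carried along and included implicitly in the combined set description), and if $p$ lies in the $\rr_{\ox}$-piece it is reached by the companion control from Theorem~\ref{thm: unactuated_grs_approximation}; in either case the same control works for every $(f,G) \in \mathcal{D}_{\mathrm{con}}$, which is exactly membership in $\rr^{\G}(T,x_0)$.

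The main obstacle — and the step I would spend the most care on — is making precise the sense in which the union of two lower-dimensional reachable sets embeds into the $n$-dimensional GRS: a point in $\rr_{\tx}(T,\tx_0) \subseteq \R^m$ and a point in $\rr_{\ox}(T,x_0) \subseteq \R^{n-m}$ must be lifted to full states in $\R^n$, and one must argue that the $\ox$-dynamics are controlled (through their dependence on $\tx$) compatibly with whatever $\tx$-trajectory realizes the first component, and vice versa. Concretely, I would fix a nominal admissible control that simultaneously keeps $\tx$ inside $\B^m(\tx_0,b(t))$ — guaranteeing the hypotheses of Theorem~\ref{thm: unactuated_grs_approximation} hold along the realized trajectory — and then argue that the reachable $\ox$-values under that confinement are exactly those captured by~\eqref{eq: ode_unactuated_grs_underapproximation}, so that the cross-coupling does not shrink the guaranteed set below the stated union. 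A secondary point worth a sentence is the uniformity over $\mathcal{D}_{\mathrm{con}}$: since each ingredient theorem already builds in the worst case over all consistent $(f,G)$ (the $\B$-domain restriction and the growth-rate gains $c = L_f^\nn + L_G^\nn$ and $L_f^\nn$ do this), the intersection over the dynamic class needed for the GRS definition is automatically respected, and no further argument is required beyond citing those results.
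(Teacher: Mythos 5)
Your proposal matches the paper's own (implicit) argument: the paper offers no standalone proof of Theorem~\ref{thm: full_grs_underapproximation}, presenting it as an immediate consequence of Theorem~\ref{thm: ball_underapproximation} with \cite[Theorem 3]{shafa2022reachability} for the actuated block and Theorem~\ref{thm: unactuated_grs_approximation} (with Lemma~\ref{lem: spherical_reachable_set} supplying the ball-valued input set) for the unactuated block, which is exactly how you assemble it. Your added care about lifting the two lower-dimensional sets $\rr_{\tx}(T,\tx_0)\subseteq\R^m$ and $\rr_{\ox}(T,x_0)\subseteq\R^{n-m}$ into full states in $\R^n$, and about the compatibility of the $\tx$-confinement with the $\ox$-dynamics, is a refinement of the same route rather than a different one.
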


Theorem \ref{thm: full_grs_underapproximation} provides a complete set of reachable states for all states in system \eqref{eq: nonlinaer_control_affine_system} using solely the information in Assumption \ref{ass: growth_rate_bounds}. This is achieved by first computing a decomposed GRS estimation for \eqref{eq: underactuated_nonlinear_system_a} under the given input bound, and then using this estimated GRS as the input bound for \eqref{eq: underactuated_nonlinear_system_b}, thereby obtaining the complete set of reachable states. In the next section, we detail how $\rr_{\tx}(T,\tx_0)$ is used in controller synthesis. Then, in our practical demonstration, we will show how $\rr_{\ox}(T,x_0)$ informs high-level decision making, that is, informs which reachable paths to follow.

\section{Controller Synthesis}\label{Sec: algorithm}

We now introduce an algorithm, originally presented in \cite{meng2024online}, that synthesizes control inputs designed to follow piecewise-linear trajectories for system \eqref{eq: underactuated_nonlinear_system_a}. In view of Theorem \ref{thm: ball_underapproximation} and \cite[Theorem 3]{shafa2022reachability}, for any path generated by the proxy system \eqref{E: proxy}, there exists a control signal $u$ for \eqref{eq: underactuated_nonlinear_system_a} that ensures the system follows the same path. 
We detail the design of control inputs for following such reachable paths.

\begin{figure}[t]
  \centering
  \includegraphics[width=0.48\linewidth]{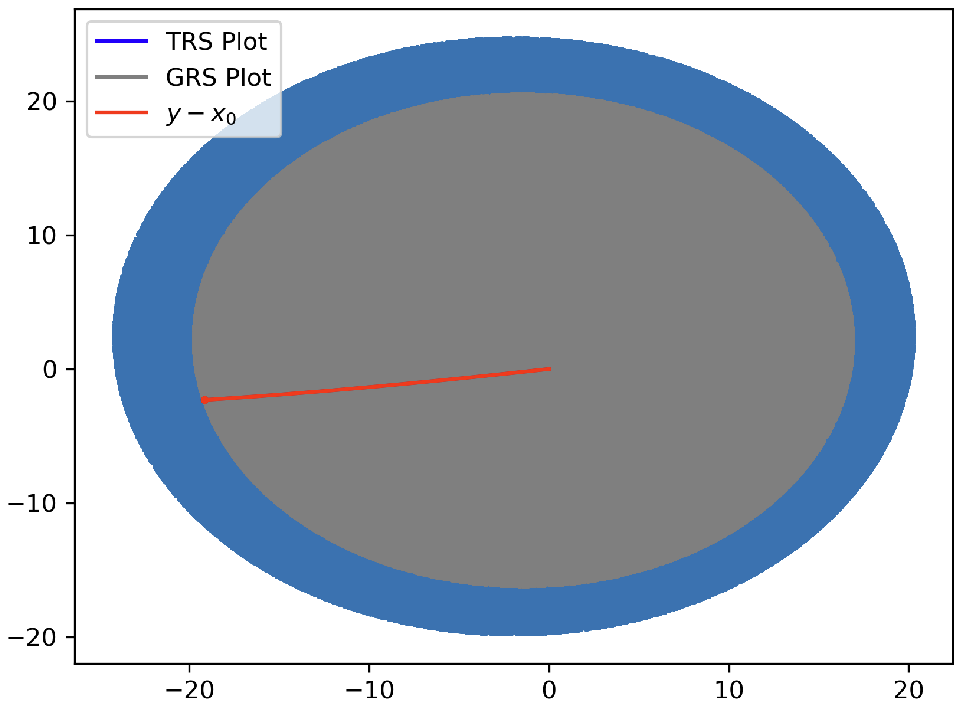}
  \includegraphics[width=0.48\linewidth]{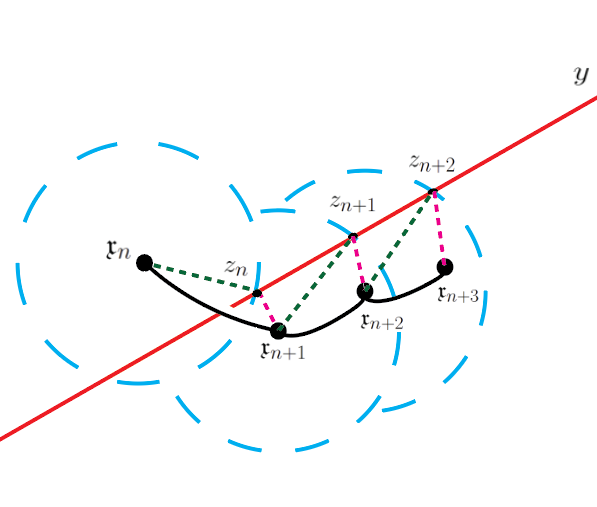}
  \caption{Trajectory following for unknown systems where the line (red) represents the desired reachable linear path contained within the GRS (gray). The  $\xk_{n} = \tx(\tau_n)$ denotes the begining point of each learning cycle, while $z_n$ represents the sequence of points automatically generated by Algorithm \ref{alg: one_time}, each located an $r$-distance apart and progressively approaching $y:=\tx_f$, toward which the system state converges within each learning cycle.}
  \label{fig: grs_trajectory_following}
\end{figure}

\subsection{Control Design}


The results in \cite{meng2024online} can be summarized as follows. For sampling period $\delta t>0$, each learning cycle has length  $\tau=(m+1)\delta t$ with start time  $\tau_n:= n\tau$. Within $[\tau_n+j\delta t, \tau_n+(j+1)\delta t)$ for each $j\in\{1, 2,\cdots, m\}$, piecewise constant controls (sample-and-hold) are applied as 
\vspace{-2.5mm}
\begin{equation}\label{E: u_nj}
u_{n,j}:=u_{n,0} + \Delta u_j,
\end{equation}
where $\Delta u_j:=\pm\epsilon e_j$ generates local excitation along orthonormal unit vectors with a small amplitude 
$\epsilon>0$ for dynamics learning. $\{u_{n,0}\}$
 initializes each cycle and steers the state toward points $\{z_n\}$ on the reference path. Each $z_n$ is designed and dynamically updated to be  placed at an $r$-distance from each cycle’s endpoint and to monotonically approaching  $\tx_f$ along the reference path (see Fig. \ref{fig: grs_trajectory_following} for an illustration), where $r = r(k, \delta t) = \sup_{{u}}|\tx_p(k(m+1)\delta t)|$.

In determining $u_{n,0}$,  we choose \begin{small}
    $u_{0, 0}= (1~-~\epsilon)~ \frac{\tG^\dagger(\tx_0) (\tx_f -\tx_0)}{\|\tG^\dagger(\tx_0)\||\tx_f - \tx_0|}$ 
\end{small} which pushes the state $\tx$ directly toward $\tx_f$. For each $n\geq 1$, parameters 
$k$, $\epsilon$, and $\delta t$ are tuned so that the above described sequence 
above-described $\{z_n\}$ is generated automatically, and $u_{n,0}$ ensures $\dot{d}_{z_n}(\tx(t)) < 0$ for all $t \in [\tau_n,\tau_{n+1})$, where $d_{z_n}(\tx) = \|\tx - z_n\|^2$. The work \cite{meng2024online} established that the mechanism holds under a sufficient condition
\begin{small}
    \begin{equation}\label{eq: convergence_condition}
\begin{gathered}
   \ee_n(\dt, \epsilon)+\ee_\mu(\dt, \epsilon)\leq r(b-c\|\tx(\tau_n)\|-|a|-\ee_r(\dt, \epsilon)+\epsilon M_0),
\end{gathered}
\end{equation}
\end{small} 

\noindent   where   $-r(b-c\|\tx(\tau_n)\|-\|a\|)$ is the guaranteed minimal force of $\dot{d}_{z_n}$ that an optimal control can generate at $\tau_n$, offset by perturbations (with the satisfaction of \eqref{eq: convergence_condition} ensuring   $\dot{d}_{z_n}$ remains negative throughout each learning cycle). The perturbation terms have the following heuristic interpretation (see Appendix~\ref{sec: error} for details):
\begin{itemize}
    \item $\ee_r(\delta t, \epsilon)$ (due to estimation error of r.h.s. of \eqref{eq: underactuated_nonlinear_system_a}),
    \item $\ee_n(\delta t, \epsilon)$ (due to estimation error of $\tx$),
    \item $\epsilon M_0$ (effect of myopic control), and 
    \item $\ee_\mu(\delta t,\epsilon)$ (deviation due to suboptimal control). 
\end{itemize}
Particularly, the term 
$\ee_\mu(\delta t,\epsilon)$ quantifies the performance gap between this suboptimal solution and the true optimal control. Since the learning–control process cannot achieve the exact optimal strategy for minimizing $\dot{d}_{z_n}(\tx(\tau_n))$ (whose minimal value is strictly less than the attraction force $-r(b-c\|\tx(\tau_n)\|-|a|)$), the controller instead seeks a near–optimal solution. Specifically,
$u_{n+1, 0}=(1-\epsilon)\operatorname{argmin}_{\lambda}\langle 2(\tx(\tau_{n+1})-z_{n+1}), \sum_{j=0}^m\lambda_j(\tx_{n,j+1}-\tx_{n,j}) \rangle,$
where  $\lambda$ is such that  $\sum_{j=0}^m\lambda_j=1 \;\text{and}\;\lambda_j\geq 0$ for all $j$, and  
$\tx_{n,j}:=\tx(\tau_n + j\delta t)$ is the shorthand notation of  state   $\tx$. 

Note that all of the perturbation terms introduced above can be tuned to arbitrarily small values, thereby providing a theoretical guarantee for the automation of the control synthesis of each $u_{n,0}$, and hence for the entire control signal in the reachability problem. We summarize the algorithm below and demonstrate its application in the next section, and kindly refer the reader to \cite{meng2024online} for further technical details.

\setcounter{algorithm}{0}
\begin{algorithm}[H]\label{alg: alg}
	\caption{Control Synthesis}\label{alg: one_time} 
	\begin{algorithmic}[1]
      \Require $d$, $m$,  $\tx_{0, 0}:=\tx_0$, $T$, $\tx_f\in\partial \rr_{\tx}(T,\tx_0)$, $\theta_0=0$, and $u_{0, 0}= (1-\epsilon) \frac{\tG^\dagger(
      \tx_0) (\tx_f -\tx_0)}{\|\tG^\dagger(\tx_0)\||\tx_f -\tx_0|}$. 
		\Require $\dt$, $\epsilon$, $k$  based on the condition \eqref{eq: convergence_condition}, where $r = \sup_{u}|\tx_p(k(m+1)\dt)|$

  \State $n=0$.
  \Repeat 
  \State $\tau_n = n(m+1)\dt$.
  \For{$j$ \textbf{from} 0 \textbf{to} $m$}
 \State  $\ub(t)\equiv u_{n,j}$ for all $t\in[\tau_n + j\dt, \tau_n+(j+1)\dt)$ based on Eq.~\eqref{E: u_nj};
  \State $\tx_{n,j+1}=\tx(\tau_n + (j+1)\delta t)$.
  \EndFor
 \State  $\xk_{n+1}=\tx_{n,m+1}$.
 \State Determine $z_{n+1}$ by solving $\|z_{n+1}-\xk_{n+1}\|^2=r^2$ with $z_{n+1}=\theta_{n+1}\tx_f$ and $\theta_{n+1}>\theta_n$. 
 \State Let 
      $u_{n+1, 0}=(1-\epsilon)\operatorname{argmin}_{\lambda}\langle 2(\xk_{n+1}-z_{n+1}), \sum_{j=0}^m\lambda_j(\tx_{n,j+1}-\tx_{n,j}) \rangle,$
 where $\sum_{j=0}^m\lambda_j = 1$ and $\lambda_j\geq 0$ for all $j$.
 \State $n:=n+1$.
 \Until $\|z_n-\tx_f\|<r$. 
	\end{algorithmic}
\end{algorithm}

\section{Simulation of Reachability Predictive Control Framework}\label{sec: simulated_results}

We now demonstrate RPC and its capability to solve the trajectory tracking and reach-avoid problem without knowledge of the system dynamics via simulation. 

We consider a scenario in which an autonomous vehicle carrying a delicate payload must navigate an uncertain environment. To accomplish the mission, the vehicle must transport the payload between $2.2$ and $2.8$ meters per second to reach the target position without damaging the payload through high-frequency vibrations \cite{el2021high}. The vehicle will travel along various unknown surfaces, including dirt, gravel, and grass.  We demonstrate that Algorithm~\ref{alg: one_time} can be leveraged to solve Problems~\ref{prb: reach_avoid_problem} and~\ref{prb: partial_trajectory_tracking} under Conditions~\ref{req: unknown_dynamics} and~\ref{req: single_system_run}. 

\subsection{Control Design Requirements}

\begin{figure}[h]
    \centering
    \includegraphics[width=0.8\linewidth]{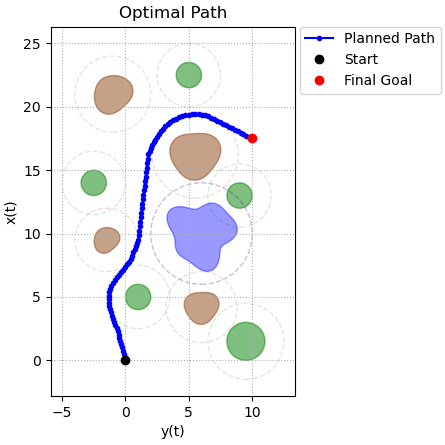}
    \caption{An optimal path determined by solving \eqref{eq: optimal_path_problem}. Path obstructing obstacles are shown in green, brown, and blue   with the tolerance radius $\B^2(c_k, r_k + \Delta)$ for all $k$ obstacles.}
    \label{fig: optimal_path}
\end{figure} 

We ultimately aim to navigate to a goal state while avoiding an unsafe set. Consider the states defined in \eqref{eq:kinematic_bicycle_model} and   notation in Problem \ref{prb: reach_avoid_problem}; we define the unsafe set as 
$\bb = \left\{(x,y,v): v\notin [2.2, 2.8]\;\text{and}\;(x,y)\in \mathcal{S}_{\text{blue}}\cup\mathcal{S}_{\text{brown}}\cup\mathcal{S}_{\text{green}}\right\}$ with the corresponding colored regions illustrated in Fig.~\ref{fig: optimal_path}. 
With $\bb$ defined, we formally present our performance objective, namely safely navigating to some goal state $\ox_f$.

\begin{obj}\label{obj: performance_objective}
    Synthesize a trajectory $\phi_u(\cdot,x_0)$ such that $\ox(0) \to \ox_f$ while $x(t) \not\in \bb$ for all $t$.
\end{obj}

We begin by determining an optimal path using sequential least squares programming (SLSQP) \cite{ma2022improved}. We solve the following optimization problem:

\begin{small}
    \begin{equation}\label{eq: optimal_path_problem}
\begin{gathered}
    \min_W J(W) = \sum_{i=1}^{N-1}\|(x_{i+1},y_{i+1})-(x_i,y_i)\|^2\\
    \mathrm{s.t.}\quad \|p_i^{\mathrm{front}} - c_k\| \geq r_k + \Delta\quad \forall\,i,k
\end{gathered}
\end{equation}
\end{small}

\noindent where $W = \begin{bmatrix}(x_1, y_1),\hdots,(x_N,y_N)\end{bmatrix} \in \R^{N\times 2}$, $p_i^{\mathrm{front}} = (x_i,y_i) + \frac{L((x_i, y_i) - (x_{i-1},y_{i-1}))}{2\|(x_i, y_i) - (x_{i-1},y_{i-1})\|}$, $c_k$ and $r_k$ are the center and radius of the $k$-th obstacle, and $\Delta$ is the tolerance around each obstacle to avoid collisions. The optimal path we will follow and its surrounding environment are shown in Fig.~\ref{fig: optimal_path}. We now discuss how we can synthesize control action designed to follow the desired path for the vehicle presented in Section \ref{Sec: vehicle_dynamics}.

\subsection{Controller Synthesis for Vehicle Dynamics}

\begin{figure}[htbp]
    \centering
    \begin{subfigure}[t]{0.24\textwidth}
        \centering
        \includegraphics[width=\linewidth]{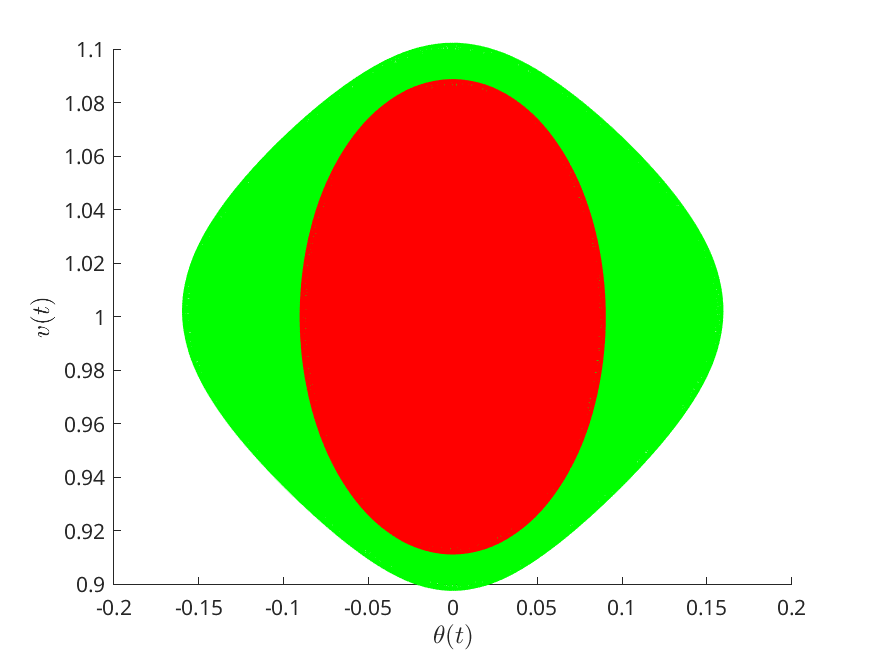}
        \label{fig:car_actuated_grs_underapproximations}
    \end{subfigure}%
    \hfill
    \begin{subfigure}[t]{0.24\textwidth}
        \centering
        \includegraphics[width=\linewidth]{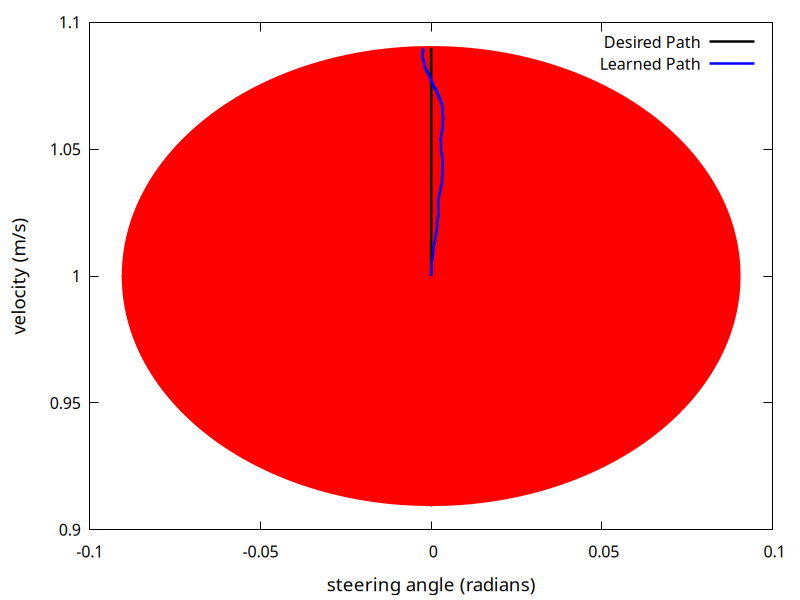}
        \label{fig:path_following_velocity_example}
    \end{subfigure}
    
    \caption{On the left, the true reachable set (green) of actuated states $\theta,v$ of system \eqref{eq:kinematic_bicycle_model} calculated knowing the dynamics, and an underapproximation (red) calculated using solely the knowledge from Assumption \ref{ass: growth_rate_bounds} for system \eqref{eq:simple_bicycle_model}. On the right, we apply Algorithm \ref{alg: one_time} to learn a path (blue) designed to follow a reachable path (black). The learned path (blue) is calculated without knowing the dynamics, but instead using $\tx_0 = \begin{bmatrix}0 & 1\end{bmatrix}^T$, $L_f^\nn = 0$, $L_G^\nn = 3$, $\tf(\tx_0) = 0$, $\tG(\tx_0) = I$.}
    \label{fig: grs_actuated_underapproximation}
\end{figure}

As discussed in Section~\ref{Sec: vehicle_dynamics}, the vehicle dynamics in \eqref{eq:kinematic_bicycle_model} are not control-affine. However, following the philosophy of GRS underestimation via proxy systems and employing myopic-type control for learning and control as in Section \ref{sec: underapproximated_guarantees} and \ref{Sec: algorithm}, our algorithm operates iteratively over short time horizons. For small angles, $\sin{\theta} \approx \theta$, $\cos{\theta} \approx 1$. Additionally, $|u_2| \leq |\tan({u_2})|$ for $|u_2| \leq 1$, and the reachable set of \eqref{eq:simple_bicycle_model} 
\begin{equation}\label{eq:simple_bicycle_model}
    \dx  = v,\;
        \dy  = v\theta,\;
        \dot{\theta}  = \frac{v}{l_r}u_2,\;
        \dv  = u_1
\end{equation}
is contained in the reachable set of \eqref{eq:kinematic_bicycle_model} (as shown in Fig. \ref{fig: grs_actuated_underapproximation}. Note that \eqref{eq:simple_bicycle_model} satisfies the assumed structure. This construction is valid for RPC purposes, as it provides an underapproximation of the GRS of \eqref{eq: nonlinaer_control_affine_system}, ensuring navigation toward a provably reachable state. Operating within this underapproximated GRS prevents the system from entering unproved regions of the state space. We can thus apply the procedures from \cite{ornik2019control, el2023online} to extract knowledge from Assumption~\ref{ass: growth_rate_bounds} for the local system~\eqref{eq:simple_bicycle_model}. The above setting ensures that our guarantees remain valid and enables controller synthesis, as illustrated in Fig.~\ref{fig: grs_actuated_underapproximation}. 

We ultimately need to follow this path while $2.2\,m/s < v < 2.8\,m/s$ to satisfy Objective \ref{obj: performance_objective}. We now detail our proposed control strategy to accomplish this task. 

\subsection{RPC Algorithm and Control Strategy}
We begin by presenting the RPC algorithm, a high-level path-following algorithm that requires iteratively applying the low-level reachability control Algorithm~\ref{alg: one_time}.
We use the notation $\hat{n} \in \nn$, where $\hat{n}$ indexes the number of times Algorithm~\ref{alg: one_time} has been applied, and $\nn$ is the integer set of all iteration indices, starting at index $0$. 

\begin{algorithm}
    \caption{Reachable Predictive Control}\label{alg: alg_rpc}
    \begin{algorithmic}[1] 
        \Require $d$, $m$,  $\tx_{0, 0}:=\tx_0$, $T$, $\tx_f^{0}\in\partial \rr_{\tx}(T,\tx_0)$, $\ox_f$, $\theta_0~=~0$, and $u_{0, 0}= (1-\epsilon) \frac{\tG^\dagger(\tx_0) (\tx_f -\tx_0)}{\|\tG^\dagger(\tx_0)\||\tx_f -x_0|}$, $\overline{r}~=~ \sup_{\ox \in \rr_{\ox}(T,x_0)}\|\ox_0 - \ox\|$, $\hat{n}=0$. 
        \Require  $\dt$, $\epsilon$, $k$  based on the condition \eqref{eq: convergence_condition}, where $r~=~\sup_u|\tx_p(k(m+1)\dt)|$
        \While{$\|\ox - \ox_f\| > \overline{r}$}
            \State Run Algorithm \ref{alg: one_time} with $\tx_f^{\hat{n}}$
                \State Update $x_0$ to current state 
                \State Calculate $\rr_{\tx}(T,\tx_0)$ using \cite[Theorem 3]{shafa2022reachability}
               
                \State Calculate $\rr_{\ox}(T,x_0)$ in using Theorem \ref{thm: unactuated_grs_approximation}
                \State Choose $\tx_f^{\hat{n} + 1} \in \rr_{\tx}(T,\tx_0)$
                \State Reset $r~=~\sup_{u}|\tx_p(k(m+1)\dt)|$ for Algorithm \ref{alg: one_time}
                 \State Reinitialize   $\theta_0 = 0$ for Algorithm \ref{alg: one_time}
                \State $\overline{r}~=~\sup_{\ox \in \rr_{\ox}(T,x_0)}\|\ox_0 - \ox\|$
                
        \State $\hat{n}:=\hat{n}+1$
        
        \EndWhile
    \end{algorithmic}
\end{algorithm}

To correctly apply Algorithm~\ref{alg: alg_rpc}, we must reinitialize the initial conditions at the end of each iteration to the system’s current state,  update the estimates of $\rr_{\tx}(T,\tx_0)$ and $\rr_{\ox}(T,x_0)$, and to choose $\tx_f^{\hat{n} + 1}$, such that Algorithm~\ref{alg: one_time} can be used to synthesize a controller that reaches $\tx_f^{\hat{n} + 1}$. The selection of $\tx_f^{\hat{n} + 1}$ is guided by the updated information of $\tx_f^{\hat{n} + 1}$ and its relative position w.r.t. the desired path, which we look over multiple time horizons for high-level planning. Physically, this corresponds to tuning the velocity and steering angles so that the controlled state trajectory   converges to the desired path. The following remark provides conditions under which the calculated reachable sets are simply translated, that is, situations where the calculated sets are affine transforms centered at different initial conditions.


\begin{rem}\label{rem: grs_multiple_horizons}
    Consider notation consistent with Theorem \ref{thm: full_grs_underapproximation} and notation where given $\overline{a} \in \R^m$ and a set $\mathcal{A} \in \R^m$, $\mathcal{A} + \overline{a} = \{a + \overline{a}~|~a \in \mathcal{A}\}$. Let $L_f^{\hat{n}}$ and $L_G^{\hat{n}}$ be the Lipschitz constants around a neighborhood of $x(0)$ and similarly $L_f^{\hat{n} + 1}$ and $L_G^{\hat{n} + 1}$ be the same around a different neighborhood centered at $y(0)$. For $x,\,y \in \R^n$ such that $x(0) \neq y(0)$, if $\|\tG^\dagger(\tx_0)\|^{-1} = \|\tG^\dagger(\ty_0)\|^{-1}$, $\|F^\dagger(x_0)\|^{-1} = \|F^\dagger(y_0)\|^{-1}$, $L_f^{\hat{n}} = L_f^{\hat{n} + 1}$ and $L_G^{\hat{n}} = L_G^{\hat{n} + 1}$, then $\rr_{\tx}(T,\tx_0) + (\ty_0 - \tx_0) = \rr_{\ty}(T,\ty_0)$. Additionally, if $\|\tx_0\| = \|\ty_0\|$, then $\overline{\rr}_{\ox}(T,x_0) + (y_0 - x_0) = \overline{\rr}_{\overline{y}}(T,y_0)$. \hfill $\square$
\end{rem}

\begin{figure}[htbp]
    \centering
    \begin{subfigure}[t]{0.24\textwidth}
        \centering
        \includegraphics[width=\linewidth]{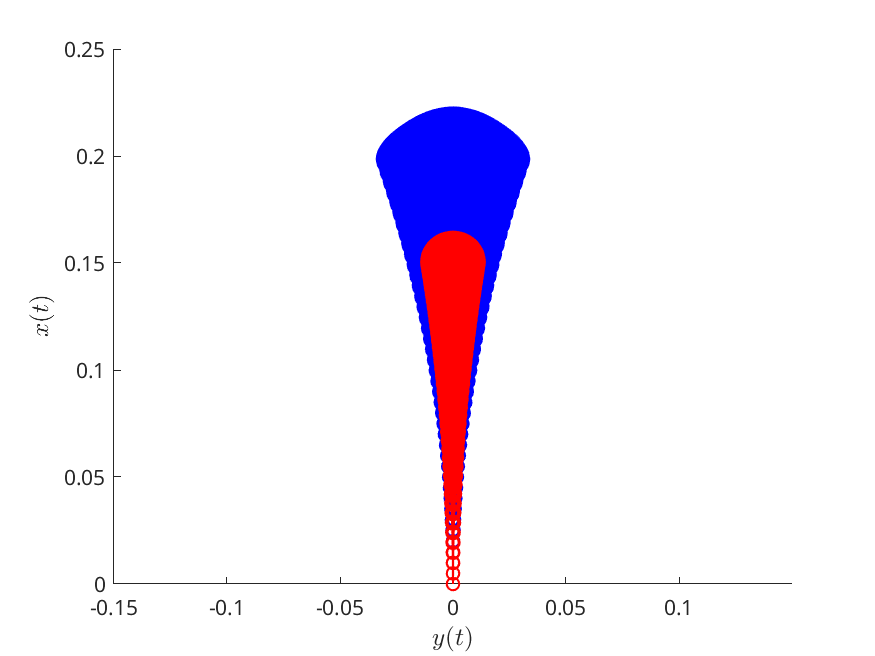}
        \label{fig:car_actuated_grs_underapproximations}
    \end{subfigure}%
    \hfill
    \begin{subfigure}[t]{0.24\textwidth}
        \centering
        \includegraphics[width=\linewidth]{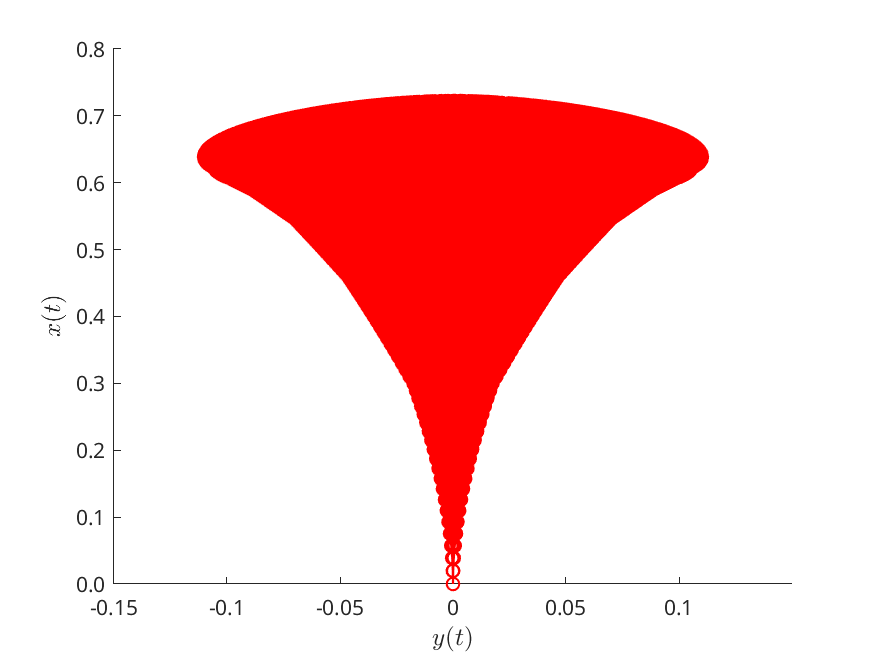}
        \label{fig:path_following_velocity_example}
    \end{subfigure}
    
    \caption{On the left, we have the reachable set (blue) of states $\ox = \begin{bmatrix}x & y\end{bmatrix}^T$ of system \eqref{eq:simple_bicycle_model}, calculated knowing the dynamics, and its \textit{guaranteed} underapproximation (red). On the right, we calculate $\rr_{\ox}^{\tn}(T,x_0)$. The sets in red are calculated without knowledge of the dynamics, using solely the knowledge of $x(0) = \begin{bmatrix}0 & 0 & 0 & 2.0\end{bmatrix}^T$ for $T = 0.1$ seconds, $\tf(\tx_0) = 0$, $\tG(\tx_0) = \operatorname{diag}(1,2)$, $L_f^{\hat{n}} = 0$, and $L_G^{\hat{n}} = 3$ for $\hat{n} \in \{1,\hdots,5\}$.}
    \label{fig: grs_position_underapproximation}
\end{figure}

Intuitively, for short time horizons, the values in Remark \ref{rem: grs_multiple_horizons} may not change significantly, making it feasible to approximate a larger set of states for path planning. We emphasize that controller synthesis is only computed using the guaranteed set of states $\rr_{\tx}(T,\tx_0)$; thus, the paths we follow remaint \textit{guaranteed} reachable paths. However, for high-level planning, we can closely approximate a set of states that are reachable to help inform which guaranteed path to follow. Let $\rr_{\ox}^{\tn}(T,x_0)$ be the reachable set $\rr_{\ox}(T,x_0)$ calculated under the assumption that Remark~\ref{rem: grs_multiple_horizons} holds for $\tn$ time horizons of length $T$ at $x(0) = x_0$. Fig. \ref{fig: grs_position_underapproximation} illustrates the reachable sets calculated at $x_0 = \begin{bmatrix}0 & 0 & 0 & 2.0\end{bmatrix}^T$ for $T = 0.1$ and $\tn = 5$. We use $\rr_{\ox}^{5}(0.1,x_0)$ to guide the selection of $\tx_f^{\hat{n} + 1}$ in Algorithm \ref{alg: alg_rpc} at each iteration until Objective \ref{obj: performance_objective} is completed. This strategy is formally presented in the table below.

\begin{table}[h!]
\centering
\begin{tabular}{|c|c|}
        \hline
        \textbf{Control Action} & \textbf{Condition} \\
        \hline
        Increase $v$ & $\rr_{\ox}^5(0.1,x_0)\,\cap$ desired path $\neq \varnothing$, $v \leq 2.3 (m/s)$ \\
        \hline
        Zero Input & $\rr_{\ox}^5(0.1,x_0)\,\cap$ desired path $\neq \varnothing$, $v \geq 2.7 (m/s)$\\
        \hline
        Increase $\theta$ & red $\cap$ green $= \varnothing$, car to left of desired path \\
        \hline
        Decrease $\theta$ &  red $\cap$ green $= \varnothing$, car to right of desired path \\
        \hline
    \end{tabular}
    \caption{Strategy for choosing $\tx_f^{\hat{n}+1}$ in Algorithm \ref{alg: alg_rpc}.}
    \label{table: control_strategy}
\end{table}

\begin{figure}[t]
  \centering
  \includegraphics[width=0.48\linewidth]{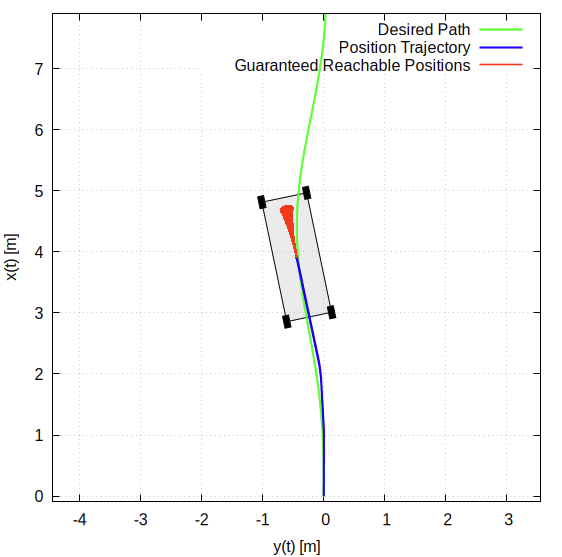}\hfill
  \includegraphics[width=0.48\linewidth]{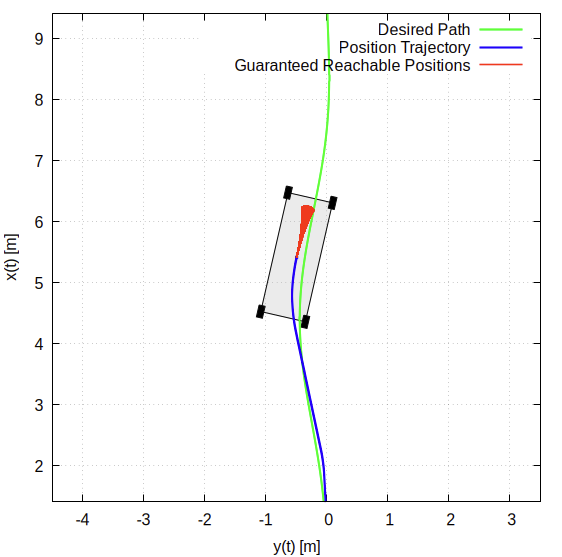}
  \caption{Autonomous path following of the desired path (green) by informed decision making using calculated guaranteed reachable positions (red) over a total of $0.5$ seconds.}
  \label{fig: path_following}
\end{figure}

Fig. \ref{fig: path_following} illustrates the  strategy shown in Table \ref{table: control_strategy}. Intuitively, $\tn$ can be viewed as a tuning knob that adjusts how closely the vehicle should follow the position path; the smaller $\tn$, the more often the control effort would focus $\theta$ to remain on the desired position path. We next detail how we can tune the parameters of Algorithms \ref{alg: one_time} and \ref{alg: alg_rpc} to provide a solution to design objective introduced in Objective \ref{obj: performance_objective}.

\subsection{Real-Time Performance}

\begin{figure}[htbp]
    \centering
    
    \begin{subfigure}[t]{0.26\textwidth} 
        \vspace{0pt}
        \includegraphics[width=\textwidth]{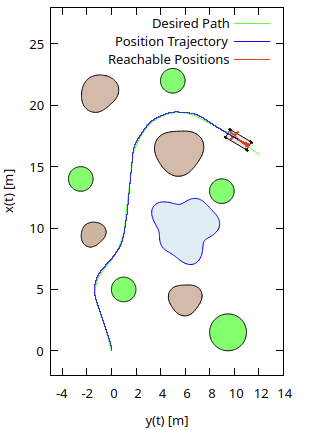}
        \label{fig:big}
    \end{subfigure}%
    \hfill
    \begin{subfigure}[t]{0.22\textwidth}
        \vspace{0pt}\includegraphics[width=\textwidth]{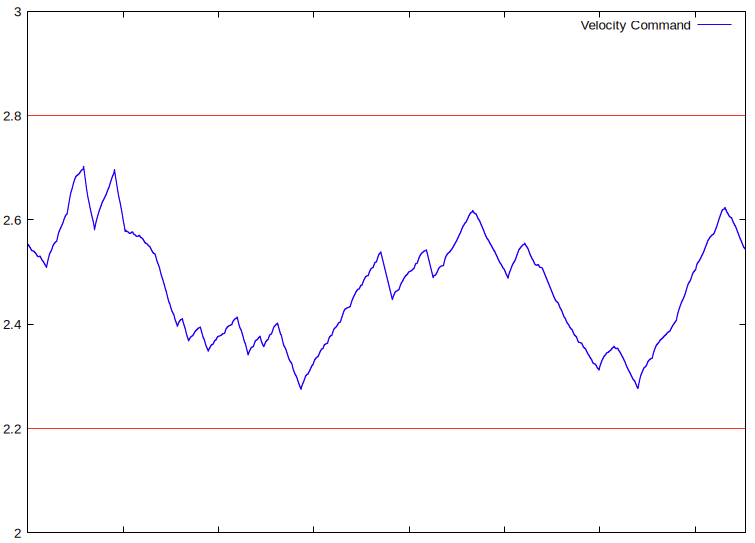}
        \label{fig:top}
        
        \vspace{0.2em} 
        
        \includegraphics[width=\textwidth]{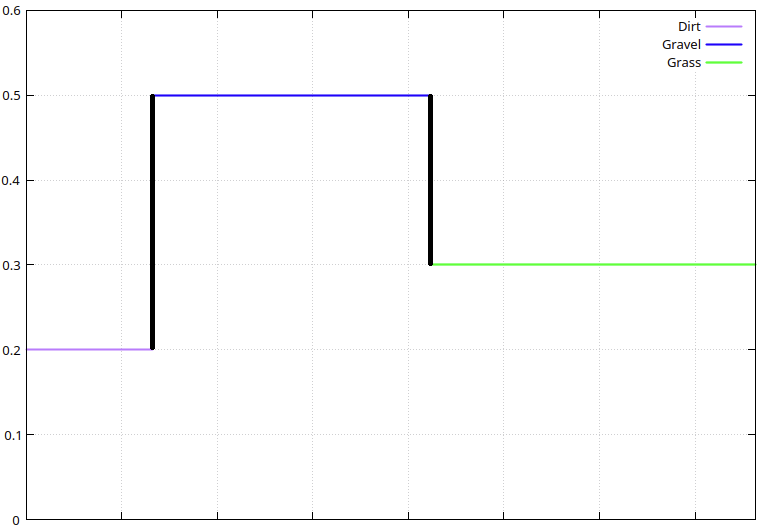}
        \label{fig:bottom}
    \end{subfigure}

    \caption{Unmanned vehicle position trajectory (blue) closely following an optimal desired path (green) guided by its reachable positions (red) without any obstacle collisions. The top-right graph shows the velocity over time such that $v(t) \not\in \bb$ for all $t$ while the bottom-right graph shows the varying unknown $r_c$ disturbance magnitudes as the vehicle traversed the unknown terrain.}
    \label{fig: RPC_Performance}
\end{figure}


For RPC to operate in real time, it is necessary to compute $\rr_{\tx}(T,\tx_0)$, $\rr_{\ox}(T,x_0)$, $\rr_{\ox}^{\tn}(T,x_0)$ rapidly. This is feasible due to the simple structure of the proxy systems \eqref{E: proxy} and \eqref{eq: ode_unactuated_grs_underapproximation}. Existing results \cite[Section~3]{meng2024online} show that, to characterize the boundary of the reachable sets of these systems, it suffices to consider inputs on the boundary of $\uu$. Consequently, we can efficiently compute the reachable sets by employing Monte Carlo simulations that solve ODEs with randomly sampled time-varying inputs restricted to the boundary of $\uu$. This significantly reduces computation time of $\rr_{\tx}(T,\tx_0)$, $\rr_{\ox}(T,x_0)$, $\rr_{\ox}^{\tn}(T,x_0)$. Additionally, as shown in Algorithm~\ref{alg: one_time}, the surrogate suboptimal control can be obtained via linear programming, achieving computation times comparable to those for the GRS computation. For the purposes of this application, we set $T = 0.1$ and $\tn = 5$ and for each iteration $\hat{n}$, we had a total computation time of $\approx 0.081$ seconds to calculate all results of interest. For additional background and analysis about how Algorithm \ref{alg: one_time} can be calculated in real time, we refer the reader to \cite{meng2024online}.


Throughout the simulation, we used $L_f^{\hat{n}} = 0$ and $L_G^{\hat{n}}~=~3$ for all $\hat{n}$, with $\tf(\tx_0)$ and $\tG(\tx_0)$ updated at each $\hat{n}$ depending on the reinitialized value of $x_0$ at the start of each iteration of Algorithm \ref{alg: one_time}. We simulated results using the dynamics from \eqref{eq:kinematic_bicycle_model} with $r_c = 0.2,\,0.5,\,0.3$, which pertains to the rolling resistance coefficients of dirt, gravel, and grass respectively. The reachable positions for the high-level planning are calculated using Theorem \ref{thm: unactuated_grs_approximation}. Notice that, adopting the notation of Theorem \ref{thm: generalized_gvs_ball_underapproximation}, this theorem requires that $b(t) \leq \|a\|$ where $\|a\| = \|\tx_0\|$ and $b(t) = \|y - \tx_0\|$ for any $y \in \partial\B^m(\tx_0; \|y - \tx_0\|) \subseteq \rr_{\tx}(T,\tx_0)$. These conditions are always met because $2.2 \leq v_0 \leq 2.8$ and because Algorithm \ref{alg: one_time} uses short time horizons, so $b(t)$ is always sufficiently small. In practice, if $\|\tx_0\|$ were small, the system would be near rest and we argue there would likely be no need for the implementation of safety-critical path following for a system at rest. In other words, in practice, one can expect the conditions for Theorem~\ref{thm: unactuated_grs_approximation} will be satisfied given $\|\tx_0\|$ is not small and that Algorithm~\ref{alg: one_time} works in short time horizons.

Fig. \ref{fig: RPC_Performance} illustrates the results for a randomly generated set of obstacles with varying terrain. The analysis from Section \ref{Sec: algorithm} and \cite{meng2024online} showed that, if the inequality \eqref{eq: convergence_condition} was satisfied, then there exist parameters $\dt$, $\epsilon$, and $k$ such that there is a minimizing solution to $u_{n+1,0}=(1-\epsilon)\operatorname{argmin}_{\lambda}\langle 2(\tx(\tau_{n+1})-z_{n+1}), \sum_{j=0}^m\lambda_j(\tx_{n,j+1}-\tx_{n,j}) \rangle,$
where $u_{n+1,0}$ drives the system towards the desired state $\tx_f^{\hat{n}}$ at every $n$. By setting $\dt = 0.015$, $\epsilon = 0.1$, and $k = 5$, Algorithm \ref{alg: one_time} was able to follow a desired trajectory to $\tx_f^{\hat{n}}$ at each iteration. Implementing the control strategy from Table \ref{table: control_strategy}, we see that RPC is capable of closely following the desired optimal path without ever intersecting an unsafe set by iteratively applying Algorithm \ref{alg: one_time}. Eventually, RPC completed Objective \ref{obj: performance_objective} without any position obstacle collisions while the vehicle velocity remained within the desired bound. 
\section{Conclusion}



In this paper, we have developed a Reachable Predictive Control (RPC) framework, illustrated with an example of an unknown vehicle model under reasonably mild assumptions—requiring only knowledge of the bounds on the Lipschitz constants. The proposed framework is capable of driving the system to follow a piecewise-linear trajectory on-the-run without prior knowledge of the system dynamics. In particular, we extend previous work from the control of fully actuated systems to the more challenging setting of underactuated systems. This includes a specific extension of GRS underapproximation techniques and a demonstration of how the control synthesis algorithm can be embedded into the RPC framework to enable an automated path-following procedure. The framework can generate control inputs automatically, using tunable parameters that allow the perturbations arising from learning and control to be made arbitrarily small. Moreover, it provides a principled approach for handling  abrupt changes in the system dynamics, enabling online learning and control with provable guarantees.

This work fills an important gap for scenarios where trajectory data is scarce, and traditional system identification methods or reinforcement learning cannot be applied effectively. The RPC framework complements and integrates well with MPC/CBF-based methods. Finally, for future engineering applications, it is of mathematical interest to sharpen the estimation bounds, thereby simplifying parameter tuning and improving practical usability.

\bibliographystyle{IEEEtran}
\bibliography{root}

\appendices
\section{Error Estimation and Growth Rate Bounds}\label{sec: error}

We summarize the perturbation terms in \eqref{eq: convergence_condition}  generated from system learning under   control of the form \eqref{E: u_nj}. 

Denote \begin{small}
    $M_0:= \max\{\sup_{\tx\in\B}|\tf(\tx)|, \sup_{\tx\in\B}|\tG(\tx)|\}$, $\lmax:=\max\{L_f^\mathcal{N}, L_G^\mathcal{N}\}$, $C:=\|\tG(\tx_0)\|\|\tG^\dagger(\tx_0)\|$, $C_0:=M_0(m+1)$, $C_1:=M_0(m+1)^2$, $C_2:=2M_0\lmax(m+1)^2$,  and 
    $C_3:=M_0\lmax(m+1)^3$. 
 Let \(\tx_{n, j}=\tx(\tau_n+j\dt)\) and $\xk_{n+1}:=\tx_{n,m+1}$. Let $v_x(u):=\tf(\tx)+\tG(\tx)u$. Then, it is clear that $\xk_{n+1}=\tx_{n+1, 0}$.  We   have the following approximation precision \cite[Lemma 4]{ornik2019control}: 
\begin{enumerate}
    \item[(1)] $|\tx(t_1)-\tx(t_2)|\leq  C_0|t_2-t_1|$, $\forall t_1, t_2\in[\tau_n, \tau_{n+1}]$. In particular, $|\xk_{n+1}-\xk_n|\leq C_1\cdot\delta t$; 
    \item[(2)] $|\frac{(\tx_{n,j+1}-\tx_{n,j})}{\dt}- v_{\tx_{n,j+1}}(u_{n,j})|\leq (C_2/4)\cdot\dt$, $\forall j\in\I$. 
    \item[(3)] $|v_{x_{n,j+1}}(u_{n,j})-v_{\xk_{n+1}}(u_{n,j})|\leq C_3\cdot\dt$,  $\forall j\in\I$.
\end{enumerate}

The perturbation terms in \eqref{eq: convergence_condition} are summarized as follows: $\ee_r(\dt, \epsilon):=C_2\dt+\epsilon M_0$, $\ee_n(\dt, \epsilon):=2M_0C_1\dt+C_1\ee_r(\dt, \epsilon)$,  $\ee_\mu(\dt,\epsilon)= (3L_d(C_4/C_3)  \ee_\lambda(\dt, \epsilon)
            +L_dC_0\dt)/2$,  where
     $\ee_\lambda(\dt, \epsilon)=2((4m^{\frac{3}{2}}+\epsilon)/\epsilon) \cdot C_3\cdot \dt$.
    \end{small} 

\newline

We introduce growth rate bounds on $F(x(t))$ from \eqref{eq: underactuated_nonlinear_system_b} using the knowledge available from Assumption~\ref{ass: growth_rate_bounds}.

\begin{lem}\label{lem: growth_rate_bound}
    Let $\tx(t) \in \B^m(\tx_0;b(t))$ where $b(t) \in \R_+$ and $\|\tx(t)\| > 0$. Let $\alpha_{\mathrm{min}} = \|\tx_0\| - b(t)$ and $\alpha_{\mathrm{max}} = \|\tx_0\| + b(t)$. If $L_F(\|x-x_0\|) = \frac{L_f^\nn\alpha_{\mathrm{min}}\alpha_{\mathrm{max}} + \|\of(x_0)\|(1 + 2\alpha_{\mathrm{max}})}{\alpha_{\mathrm{min}}^3} + \mathcal{O}(\|x - x_0\|)$ where $\mathcal{O}(\|x - x_0\|) = \frac{L_f^\nn(\alpha_{\mathrm{min}} + 2\alpha_{\mathrm{max}})}{\alpha_{\mathrm{min}}^3}\|x - x_0\|.$ Then $\|F(x) - F(x_0)\| \leq L_F(\|x - x_0\|)\|x - x_0\|$.
\end{lem}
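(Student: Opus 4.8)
The plan is to unpack the relation $\dot{\ox}(t) = \of(x(t)) = F(x(t))\tx(t)$ from \eqref{eq: underactuated_nonlinear_system_b} and solve for $F$ componentwise so that the Lipschitz behavior of $F$ can be traced back to the known Lipschitz constant $L_f^\nn$ of $\of$ together with a bound on $\|\of(x_0)\|$. Concretely, since $\tx(t)\in\B^m(\tx_0;b(t))$ we have $\|\tx(t)\|\in[\alpha_{\mathrm{min}},\alpha_{\mathrm{max}}]$ with both endpoints positive under the hypothesis $\|\tx(t)\|>0$ for short horizons. The natural representation is $F(x) = \of(x)\,\tx^{\trans}/\|\tx\|^2$ (the minimum-norm right factor), so the task reduces to estimating $\|F(x) - F(x_0)\|$ in terms of $\|\of(x) - \of(x_0)\|$, $\|\tx - \tx_0\|$, and the denominators $\|\tx\|^2$, $\|\tx_0\|^2$.

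First I would write
\begin{small}
\begin{equation*}
F(x) - F(x_0) = \frac{\of(x)\tx^{\trans}}{\|\tx\|^2} - \frac{\of(x_0)\tx_0^{\trans}}{\|\tx_0\|^2}
\end{equation*}
\end{small}
and split it by the standard add-and-subtract trick into three pieces: one controlled by $\|\of(x)-\of(x_0)\|\le L_f^\nn\|x-x_0\|$, one controlled by $\|\tx - \tx_0\| \le \|x-x_0\|$, and one controlled by the difference of the scalar factors $\bigl|\|\tx\|^{-2} - \|\tx_0\|^{-2}\bigr|$. For the last piece I would use $\bigl|\|\tx\|^{-2}-\|\tx_0\|^{-2}\bigr| = \bigl|\|\tx_0\|^2 - \|\tx\|^2\bigr|/(\|\tx\|^2\|\tx_0\|^2) \le (\alpha_{\mathrm{min}}+\alpha_{\mathrm{max}})\|\tx-\tx_0\|/\alpha_{\mathrm{min}}^4$, bounding each denominator below by $\alpha_{\mathrm{min}}^2$ and using $\bigl|\|\tx\|-\|\tx_0\|\bigr|\le\|\tx-\tx_0\|$. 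For the factors $\|\of(x_0)\|$ and $\|\tx\|,\|\tx_0\|$ appearing in the numerators I would bound them by $\alpha_{\mathrm{max}}$ (or leave $\|\of(x_0)\|$ explicit, matching the statement). Collecting the three contributions and factoring out $\|x-x_0\|$ yields a coefficient of the form $\bigl(L_f^\nn\alpha_{\mathrm{min}}\alpha_{\mathrm{max}} + \|\of(x_0)\|(1+2\alpha_{\mathrm{max}})\bigr)/\alpha_{\mathrm{min}}^3$ plus a term linear in $\|x-x_0\|$ collected into $\mathcal{O}(\|x-x_0\|)$, which is exactly the claimed $L_F(\|x-x_0\|)$.

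The main obstacle I anticipate is bookkeeping: matching the exact constants $\alpha_{\mathrm{min}}^3$ versus $\alpha_{\mathrm{min}}^4$ in the denominators and the precise coefficients $(1+2\alpha_{\mathrm{max}})$ and $(\alpha_{\mathrm{min}}+2\alpha_{\mathrm{max}})$ requires choosing the add-and-subtract decomposition carefully — presumably grouping $\of(x_0)$ with the denominator difference so that one power of $\|\tx\|$ or $\|\tx_0\|$ cancels against $\|\tx\|^{-2}\|\tx_0\|^{-2}$ to leave $\alpha_{\mathrm{min}}^3$ rather than $\alpha_{\mathrm{min}}^4$. A secondary subtlety is justifying that $F$ is well-defined and that $F(x)\tx = \of(x)$ holds pointwise along the trajectory; this follows from the structural assumption in \eqref{eq: underactuated_nonlinear_system_b} and the fact that $\of(x)$ lies in the span of $\tx$ is not actually needed if one simply \emph{defines} $F(x) := \of(x)\tx^{\trans}/\|\tx\|^2$, which satisfies $F(x)\tx = \of(x)$ whenever $\of(x)\parallel\tx$ and otherwise gives the least-squares factor — so I would note that the bound holds for this canonical choice, which is all the later reachability arguments use.
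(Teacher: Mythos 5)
Your proposal follows essentially the same route as the paper's proof: define $F(x) = \of(x)\tx^{\trans}/\|\tx\|^2$, apply the add-and-subtract decomposition, bound the pieces via $\|\of(x)-\of(x_0)\|\leq L_f^\nn\|x-x_0\|$, $\|\tx-\tx_0\|\leq\|x-x_0\|$, and the reciprocal-squared-norm difference, and use the cancellation of one factor of $\|\tx_0\|$ (from $\|\of(x_0)\tx_0^{\trans}\| = \|\of(x_0)\|\|\tx_0\|$ against $\|\tx\|^{2}\|\tx_0\|^{2}$) to land on the $\alpha_{\mathrm{min}}^3$ denominator, exactly as the paper does. The only slip is your closing remark: with this rank-one choice, $F(x)\tx = \of(x)\tx^{\trans}\tx/\|\tx\|^2 = \of(x)$ holds identically, not only when $\of(x)$ is parallel to $\tx$, but this does not affect the argument.
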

\begin{proof}
    Consider $F(x):= \frac{\of(x)\tx^T}{\|\tx\|^2}$. Then, for any $x$, $y$, $\|F(x) - F(y)\| \leq \left\|\frac{\of(x)\tx^T - \of(y)\ty^T}{\|\tx\|^2}\right\| + \|\of(y)\ty^T\|\left|\frac{1}{\|\tx\|^2} - \frac{1}{\|\ty\|^2}\right| \leq \frac{\|\of(x) - \of(y)\|\|\tx\| + \|\of(y)\|\|\tx - \ty\|}{\|\tx\|^2} + \|\of(y)\|\frac{\left|\|\tx\|^2 - \|\ty\|^2\right|}{\|\tx\|^2\|\ty\|}.$ We have $\|\of(x) - \of(y)\| \leq L_f^\nn\|x - y\|$, $\|\tx - \ty\| \leq \|x - y\|$, $\|\of(y)\| \leq \|f(x)\| + L_f\|x - y\|$, and $\|\tx\| \leq \|\tx_0\| + c(t)$, so $\|\of(x) - \of(y)\|\|\tx\| + \|\of(y)\|\|\tx - \ty\| \leq (L_f^\nn(\|\tx_0\| + c(t)) + \|\of(x)\|)\|x - y\| + L_f^\nn\|x - y\|^2$. Additionally, using the property that $\left|\|\tx\|^2 - \|\ty\|^2\right| \leq (\|\tx\| + \|\ty\|)\|\tx - \ty\|$, we get $\|\of(y)\|\|\ty\|\left|\|\tx\|^2 - \|\ty\|^2\right| \leq 2\|\of(x)\|(\|\tx_0\| + b(t))\|x - y\| + 2L_f^\nn(\|\tx_0\| + b(t))\|x-y\|^2$. Finally, substituting the right-hand side of the inequality $\|\tx\|,\|\ty\| \geq \|\tx_0\| - b(t)$ provides our result.
\end{proof}

The RPC algorithm calculates reachable sets and performs controller synthesis for short time horizons, so in practice, $\|x - x_0\|$ is not large. Additionally, as $t\to 0$, $\alpha_{\max}\to\alpha_{\mathrm{min}}$, so in practice, $L_f^\nn$ often serves as an acceptable Lipschitz bound for $F(x)$. For cases where $L_f^\nn$ is not a viable Lipschitz bound for $F(x(t))$, we could use the bound from Lemma \ref{lem: growth_rate_bound}.
 \section{Additional Analysis}\label{sec: additional analysis}

\noindent \textbf{Proof of Lemma \ref{lem: spherical_reachable_set}:}

\begin{proof}
Let $x(t)$ and $z(t)$ be trajectories of the original and approximated systems, respectively, where $x(0) = z(0) = x_0$ with both systems using the same control input $u(t) \in \uu$. Suppose that $T$ is such that $\rr_z(T, x_0)\cup\rr_x(T, x_0)\subseteq \B(x_0, R)$ for some $R>0$. Let $r(t) = \|x(t)-x(0)\|$. Then, by this assumption, we have $r(t)\in[0, R]$ for all $t\in[0, T]$.

Now we suppose $\|a\|\leq b-cR$.  Observing that $$\dot{x}(t) = \dot{z}(t) + w(t), \quad \mathrm{where} \quad w(t):= a + \|a\|u(t),$$ we define the velocity sets for two systems as $V_x(r) = a+ (b-cr)\B$ and $V_z(r)=(b-cr-\|a\|)\B$, respectively, where we have slightly abused notation and written $r:=|x-x_0|$ for the instantaneous value appearing on the right-hand side of the differential equations for $r(t)$ introduced above. We claim that $V_z(r)\subseteq V_x(r) \Leftrightarrow \|a\|\leq b-cR$ for each $r\in[0, R]$. 

Indeed, for $\Leftarrow$ side, taking any $v\in V_z(r)$, we have $\|v\|\leq b-cr-\|a\|$. Then, $\|v-a\|\leq \|v\|+\|a\|\leq b-cr$, and therefore $v\in a + (b-cr)\B = V_x(r)$. Conversely, for $\Rightarrow$ side,  
$\mathbf{0}\in V_z(r)$ implies $\mathbf{0}\in V_x(r)$ by inclusion. Then, there exists $u\in\mathcal{U}$ such that $a + (b-cr)u = \mathbf{0}$, which implies $\|a\|\leq b-cr$ for all $r\in[0, R]$. 

The conclusion of $\rr_z(T,x_0) \subseteq \rr_x(T,x_0)$ follows by standard comparison for differential inclusions under the same initial condition. Following similar steps as in the proof of \cite[Corollary 3]{shafa2022reachability}, we can also conclude that $\rr_z(T,x_0)$ is some ball in $\B\subset\R^m$. 
\end{proof}

\noindent\textbf{Proof of Theorem \ref{thm: generalized_gvs_ball_underapproximation}:}

\begin{proof}
Set $\cap_{(\hat{f},\hat{G}) \in \mathcal{D}_{con}}\hat{f}(x) - f(x_0) + \hat{G}(x)\mathcal{U} \text{~equals~} \mathcal{V}^\mathcal{G}_x - f(x_0)$. On the other hand, $\cap_{(\hat{f},\hat{G}) \in \mathcal{D}_{con}} \hat{f}(x) - f(x_0) + \hat{G}(x)\mathcal{U} = \cap_{(\Tilde{f},\hat{G})\in \tilde{\mathcal{D}}_{con}}\Tilde{f}(x) + \hat{G}(x)\mathcal{U}$, where $\tilde{\mathcal{D}}_{con}$ is defined the same as before, just with the assumption that $f(x_0) = 0$. Thus, we can assume without loss of generality that $f(x_0) = 0$.

Let $d \in \mathrm{Im}(G(x_0)) = \mathrm{Im}(R)$ such that $\|d\| = 1$. We will prove that if $|k| \leq c(t)(\|G^\dagger(x_0)\|^{-1} - L_f^\nn\|x\| - L_G^\nn\|x\|)$, then equation

\begin{equation} \label{Ch8_eqn:Thm1_ProblemFormulation}
    k \cdot d = \hat{f}(x) + \hat{G}(x)u,
\end{equation}

\noindent where $(\hat{f},\hat{G}) \in \mathcal{D}_{con}$, admits a solution $u \in \mathcal{U}=\mathbb{B}^{m}(\|a\|;b(t))$.

We subtract $\hat{f}(x)$ from both sides of \eqref{Ch8_eqn:Thm1_ProblemFormulation}. Since $\hat{f}(x) \in \mathrm{Im}(R)$ by Assumption 1 and $kd \in \mathrm{Im}(R)$ by definition, then $kd - \hat{f}(x) \in \mathrm{Im}(R)$. Also, $\mathrm{Im}(\hat{G}(x)) = \mathrm{Im}(R)$ by Lemma~1. Hence, there exists a vector $\bar{u} \in \mathbb{R}^m$ such that $kd - \hat{f}(x) = \hat{G}(x)\bar{u}$. Now, through the rank-nullity theorem \cite{strang2016introduction}, we can write $\bar{u} = u + u_2$ where $u \in \mathrm{Im}(\hat{G}(x)^T)$ and $u_2 \in \mathrm{Ker}(\hat{G}(x))$. Thus, $\hat{G}(x)\bar{u} = \hat{G}(x)(u + u_2) = \hat{G}(x)u$; hence, $kd - \hat{f}(x) = \hat{G}(x)u$.

We multiply both sides of $kd - \hat{f}(x) = \hat{G}(x)u$ on the left by $\hat{G}^\dagger(x)$, resulting in $\hat{G}^\dagger(x) (kd - \hat{f}(x)) = \hat{G}^\dagger(x) \hat{G}(x)u$. The term $\hat{G}^\dagger(x) \hat{G}(x)u$ results in the projection of $u$ onto the $\mathrm{Im}(\hat{G}(x)^T)$ \cite{strang2016introduction}. Given that $u \in \mathrm{Im}(\hat{G}(x)^T)$, by definition of a projection, $\hat{G}^\dagger(x) (kd - \hat{f}(x)) = \hat{G}^\dagger(x) \hat{G}(x)u = u$. Thus, if we prove that:

\begin{equation}
    \|\hat{G}^\dagger(x)(k \cdot d-\hat{f}(x))\| \leq c(t),
    \label{Ch8_eqn:thm1_1}
\end{equation}

\noindent we will have $\|u\| \leq c(t)$, i.e., $u \in \mathcal{U}$. Utilizing $\|d\| = 1$ along with the product and triangle inequalities for matrices, we arrive at \eqref{Ch8_eqn:thm1_4} and \eqref{Ch8_eqn:thm1_5}:

\begin{align}
    \|\hat{G}^\dagger(x)(k \cdot d-\hat{f}(x))\| \leq |k|\|\hat{G}^\dagger(x)~ d\| + \|\hat{G}^\dagger(x) \hat{f}(x)\| \label{Ch8_eqn:thm1_4}, \\
    \leq |k|\|\hat{G}^\dagger(x)\| + \|\hat{G}^\dagger(x)\| \|\hat{f}(x)\|. \label{Ch8_eqn:thm1_5}
\end{align}

From \eqref{Ch8_eqn:thm1_5} it follows that the set of all $k$ that satisfy $|k| \|\hat{G}(x)^{\dagger}\| + \|\hat{G}(x)^{\dagger}\| \|\hat{f}(x)\| \leq c(t)$ is a subset of all $k$ that satisfy \eqref{Ch8_eqn:thm1_1}. In other words, if:

\begin{equation}
    |k| \leq c(t)\left(\|\hat{G}^\dagger(x)\|^{-1} - \|\hat{f}(x)\|\right),
    \label{Ch8_eqn:thm1_2}
\end{equation}

\noindent then $k$ satisfies \eqref{Ch8_eqn:thm1_1}. We note $\|\hat{G}^\dagger(x)\| \neq 0$ from the definition of the Moore-Penrose pseudoinverse and because $G(x) \neq 0$ from Lemma~1. 

By Weyl's inequality for singular values \cite{stewart1998perturbation} and Assumption 2, we obtain the following inequalities: $\|\hat{G}^\dagger(x)\|^{-1} \geq \|G^\dagger(x_0)\|^{-1} - L_G^\nn\|x\|$ and $\|\hat{f}(x)\| \leq L_f^\nn\|x\|$. Thus, since we assumed that $k$ satisfies:

\begin{equation}
    |k| \leq c(t)\left(\|G^\dagger(x_0)\|^{-1} - L_f^\nn\|x\| - L_G^\nn\|x\|\right),
    \label{Ch8_eqn:thm1_6}
\end{equation}

\noindent it satisfies \eqref{Ch8_eqn:Thm1_ProblemFormulation}.
\end{proof}

\end{document}